\documentclass[lettersize,journal]{IEEEtran}
\usepackage{multirow}
\usepackage{float}
\usepackage{enumitem}
\usepackage{amsthm}
\usepackage{amsmath}
\usepackage{amsfonts}

\newtheorem{proposition}{Proposition}
\usepackage{graphicx}
\usepackage{algorithm}
\usepackage{algorithmic}
\usepackage{subcaption}
\usepackage{textcomp}
\usepackage{stfloats}
\usepackage{xcolor}
\usepackage{hyperref}
\usepackage{makecell}
\usepackage{csquotes}
\usepackage{tabularx}
\theoremstyle{remark}
\usepackage{bm}
\newcommand{\1}{\mathbf{1}}

\hyphenation{op-tical net-works semi-conduc-tor IEEE-Xplore}

\begin{document}
\title{Efficient Curvature-aware Graph Network 
}

\author{
        Chaoqun~Fei,
        Tinglve~Zhou,
        Tianyong~Hao,
        Yangyang~Li\\
        \thanks{C.Q. Fei and T.L. Zhou are with the School of Artificial Intelligence, South China Normal University, Foshan 528225, China (e-mail: cqfei@m.scnu.edu.cn; tingluezhou@m.scnu.edu.cn)}
        \thanks{T.Y. Hao is with the School of Computer Science, South China Normal Univerisity, Guangzhou 510631, China (e-mail: haoty@126.com)}
        \thanks{Y.Y. Li is with the State Key Laboratory of Mathematical Sciences, Academy of Mathematics and Systems Science, Chinese Academy of Sciences, Beijing 100190, China (e-mail: yyli@amss.ac.cn)
        \emph{ (Corresponding author: Yangyang Li.)}}
}

\markboth{Journal of \LaTeX\ Class Files,~Vol.~14, No.~8, August~2021}%
{Shell \MakeLowercase{\textit{et al.}}: A Sample Article Using IEEEtran.cls for IEEE Journals}

\maketitle

\begin{abstract}
Graph curvature provides geometric priors for Graph Neural Networks (GNNs), enhancing their ability to model complex graph structures, particularly in terms of structural awareness, robustness, and theoretical interpretability. Among existing methods, Ollivier-Ricci curvature has been extensively studied due to its strong geometric interpretability, effectively characterizing the local geometric distribution between nodes. However, its prohibitively high computational complexity limits its applicability to large-scale graph datasets. To address this challenge, we propose a novel graph curvature measure—Effective Resistance Curvature—which quantifies the ease of message passing along graph edges using the effective resistance between node pairs, instead of the optimal transport distance. This method significantly outperforms Ollivier-Ricci curvature in computational efficiency while preserving comparable geometric expressiveness. Theoretically, we prove the low computational complexity of effective resistance curvature and establish its substitutability for Ollivier-Ricci curvature. Furthermore, extensive experiments on diverse GNN tasks demonstrate that our method achieves competitive performance with Ollivier-Ricci curvature while drastically reducing computational overhead.
\end{abstract}
\begin{IEEEkeywords}
Effective resistance, Ollivier-Ricci curvature, Graph curvature, Message passing, Graph neural network.
\end{IEEEkeywords}

\section{Introduction}
\IEEEPARstart{G}{raph} Neural Networks (GNNs) are a class of deep learning models specifically designed to process graph-structured data\cite{Joan2013}. Unlike traditional neural networks, GNNs are capable of effectively capturing the relationships between nodes and the global structural information of the graph\cite{defferrard2016}. Through a message-passing mechanism, GNNs allow each node to aggregate information from its neighbors, gradually updating its own representation\cite{Xu2019}. This aggregation process can be iterated multiple times, enabling the network to learn deeper, more abstract features\cite{Henaff2015}\cite{Xu2019}. Due to their flexibility and powerful representational capabilities, GNNs have become an important research direction in machine learning and data mining. Key tasks include node classification\cite{xiao2022graph}, edge classification\cite{cai2020}\cite{gong2019}, and graph classification. Additionally, GNNs can be employed for graph generation\cite{guo2022}, creating new graph structures; graph regression, predicting numerical attributes of graphs or nodes; and graph embedding\cite{cai2018}, which maps graph data to a lower-dimensional space for more efficient analysis. These capabilities make GNNs highly applicable in diverse domains, including social networks\cite{ni2019community}, recommendation systems, biological network analysis, and knowledge graphs.

However, similar to other deep learning models, Graph Neural Networks (GNNs) face significant challenges in terms of interpretability. Due to their "black-box" nature, understanding and explaining the decision-making process of GNNs has become a key area of research. First, \textbf{the complexity of the message-passing mechanism} represents a major obstacle. The representation at each layer is influenced by many neighboring nodes, making it difficult to intuitively grasp the model's reasoning, such as the occurrence of "Over-smoothing"\cite{chen2020} and "Over-squashing" phenomena\cite{akansha2025}. Second, \textbf{the non-Euclidean structure of graph} introduces additional complexity. Graphs intricate non-grid structures and the complex interactions between topology and node features challenge traditional interpretation methods\cite{bronstein2017geometric}.

To address these challenges, researchers have focused on analyzing the influence of neighboring nodes on the final representation. For example, Graph Attention Networks (GATs)\cite{velickovic2017} provide interpretability by using attention weights to highlight the interactions between nodes. By analyzing attention scores across different edges, it becomes possible to identify which edges most significantly impact the final prediction. However, since the attention matrix is learned during model optimization, it provides only a limited degree of interpretability. For more general GNN models, efforts have also been made to incorporate the non-Euclidean geometric structure of the graph, quantify relationships between nodes and their local neighbors, and track the evolution of the learning process\cite{bronstein2017geometric}. These approaches offer new insights into the decision-making process of GNNs and have accelerated the development of interpretable graph neural networks.

Currently, Ollivier-Ricci curvature\cite{ollivier2009ricci}, as a type of discrete non-Euclidean graph structure, is widely applied in graph representation learning\cite{ye2019}. Rooted in the concept of abstract Ricci curvature, it reflects the geometric and topological properties of graphs. Researchers have used Ollivier-Ricci graph curvature to study the "Over-smoothing" and "Over-squashing" phenomena in GNNs\cite{nguyen2023}. They found that \textbf{positive curvature} on graph edges is related to the "Over-smoothing" phenomenon, while \textbf{negative curvature} is associated with "Over-squashing"\cite{alon2020}\cite{topping2021}.  In geometry, the edge curvature of a graph can be interpreted as the degree of variation in the way nodes are connected, quantifying changes in the adjacency matrix of the graph, such as the addition or removal of edges, or changes in edge weights. Ollivier-Ricci curvature primarily measures edge variation by considering the Wasserstein distance between nodes and their probability distributions. Based on Wasserstein distance, it assesses the relative positions and similarities between nodes, reflecting the geometric characteristics of the graph. Locally, Ollivier-Ricci curvature can serve as an indicator of whether the region is "flat." Positive curvature suggests an outward bulge, reflecting stronger inter-node connections and revealing cluster-like structures, which can contribute to "over-smoothing" in GNNs. In contrast, negative curvature indicates an inward concavity, pointing to weaker connections between nodes and highlighting bottleneck structures, thereby helping to explain the phenomenon of "over-squashing".

Although Ollivier-Ricci curvature has shown many advantages in graph analysis and representation learning, it also faces several challenges that limit its application in the field of graph representation learning:
\begin{enumerate}
	\item \textbf{High computational cost:} The computation of Ollivier-Ricci curvature depends on evaluating the probability distributions of nodes and Wasserstein distance, which can lead to significant computational costs, especially in large-scale graphs. This issue becomes more pronounced when dealing with complex graphs.
	\item \textbf{Increased model complexity:} Incorporating Ollivier-Ricci curvature into models like Graph Neural Networks (GNNs) can considerably increase model complexity, which may affect both the interpretability and debug ability of the model.
\end{enumerate}

To address this challenge, we introduce a novel and efficient graph curvature concept-\textbf{Effective Resistance Curvature}- for Graph Neural Network\cite{ellens2011}. This method models the graph as an electrical circuit, where resistance reflects the ease or difficulty of current flow, thereby quantifying the connectivity between edges. We demonstrate that the proposed resistance curvature not only captures the geometric structure of edges in a manner consistent with Ollivier-Ricci curvature, but also significantly reduces computational complexity, making it particularly suitable for large-scale graph applications.

From the perspective of message passing in Graph Neural Network, we analogize information to electric current, likening the propagation of messages across the graph to the flow of current in a circuit. In high-curvature regions, nodes are more densely interconnected, analogous to a network with multiple parallel pathways that exhibit lower effective resistance, thereby facilitating the smooth flow of current—or information. In contrast, low-curvature regions correspond to higher effective resistance, which impedes current flow. This electrical model provides an alternative yet equivalent representation of edge curvature in graphs, offering intuitive insights similar to those of Ollivier-Ricci curvature, while enhancing computational efficiency. The main contributions of this paper are as follows:
\begin{enumerate}
	\item We propose an estimation method for effective resistance curvature and analyze its time complexity, which is significantly lower than that of Ollivier-Ricci curvature.
	\item We provide a theoretical analysis showing that the effective resistance curvature and Ollivier-Ricci curvature describe the node distribution structures on the graph in a fundamentally consistent manner.
	\item In the experimental section, we apply effective resistance curvature to graph representation learning tasks and compare it with traditional Ollivier-Ricci curvature. We demonstrate that, under the advantage of lower complexity, the performance of effective resistance is comparable to that of the Ollivier-Ricci curvature.
\end{enumerate}

\section{Related Works}
\subsection{Graph Curvature}
The study of graph curvature originated from the idea of incorporating the geometric properties of manifolds into graph structure analysis, utilizing tools from geometry and topology to describe the structural features of graphs\cite{Lin2011}\cite{sandhu2015}. 

Ollivier-Ricci curvature (OR curvature) is one of the most well-known and widely used definitions in the study of graph curvature\cite{ollivier2009ricci}. Inspired by Ricci curvature on manifolds, OR curvature defines a curvature measure between two nodes in a graph based on optimal transport theory. It quantifies the difference between the distance between two nodes in the graph and the distance computed via optimal transport, thus revealing the geometric relationships between nodes and the local structural properties of the graph\cite{ollivier2010}. Research has shown that Ollivier-Ricci curvature has significant applications in graph-related learning tasks.
Forman curvature, introduced by mathematician Ronald Forman in 2003, is a high-efficiency curvature measure on graphs\cite{sreejith2016}. It adapts the traditional concept of curvature to the structure of graphs, providing an effective method for analyzing their local geometric properties\cite{weber2017}. Unlike Ollivier-Ricci curvature and other optimal transport-based curvature measures, Forman curvature defines curvature using the simpler structural elements of graphs, such as nodes and edges. However, compared to Forman curvature, Ollivier-Ricci curvature possesses stronger geometric expressiveness and interpretability, making it more suitable for capturing non-Euclidean structure\cite{samal2018}.

\subsection{Curvature-aware Graph Representation Learning}
Recent years have witnessed a growing body of work applying graph curvature to graph neural network (GNN) learning. Among various forms of graph curvature, Ollivier–Ricci curvature stands out for its geometric richness and its ability to capture the structure of node sets more accurately. Studies incorporating Ollivier–Ricci curvature into GNNs primarily focus on leveraging geometric and topological relationships between nodes to enhance performance in graph learning tasks. Ollivier–Ricci curvature has proven particularly valuable in applications such as graph segmentation, community detection, and graph clustering. For instance, Ni et al. (20019) used Ollivier–Ricci curvature to analyze community structures in social networks\cite{ni2019community}. In \cite{sia2019}, it is employed to effectively capture the geometric curvature and structural features of different communities, thereby improving the accuracy of community detection. Another study\cite{ye2019} integrates Ollivier–Ricci curvature into GNNs and proposes using geometric metrics to optimize node representations. Meanwhile, \cite{li2022curvature} introduces a GNN architecture that incorporates graph curvature for graph classification tasks. Additionally, \cite{ni2018network} presents a dynamic graph learning framework based on Ollivier–Ricci curvature.

\section{Curvature-aware graph neural network}
The core idea of curvature-aware Graph Neural Networks is to treat curvature features as additional inputs, helping the GNN better capture the relationships and structural characteristics between nodes. 
In this section, we give the general curvature graph neural network framework.

\subsection{Message Passing}
First, we give the feature aggregation operation in a Graph Convolutional Network (GCN). Let the initial graph data be represented as $G=(N,E,c)$, where $N$ denotes the set of nodes, $E$ denotes the edge set, and $c$ is the weight matrix representing the edge weights, also known as the adjacency matrix. The initial feature matrix of the nodes is denoted as $X^0$. If the initial graph is unweighted, $c$ is represented as a binary matrix with values of 0 or 1. The message passing operation between two consecutive convolutional layers in a GNN is as follows:
\begin{equation}
    X^{i+1} = f(D^{-1}\hat{c}X^iT^i)
\end{equation}
where, $X^i$ denotes the input feature matrix of the $i$-th convolutional layer and $X^{i+1}$ is the correspnding output feature matrix, also the input of the $(i+1)$-th layer. $D$ is a diagonal matrix, $\hat{c}=c+I, D_{ii}=\sum_{j}\hat{c}_{ij}$. $T^i$ denotes the learning parameters, and $f$ is the nonlinear activity function.

\subsection{Curvature Graph Neural Network}
Edge curvature reflects the significance of edge weights, and by integrating curvature into the feature aggregation process, it helps adjust aggregation weights, thereby reducing the interference of irrelevant nodes on information propagation. When edge curvature is incorporated into the graph convolution operation, the corresponding information aggregation process is as follows:
\begin{eqnarray}
     X^{i+1} &= f(\tau_i X^i T^i),\\
    \tau_i &= g(k)\cdot D^{-1}\hat{c}.
\end{eqnarray}
$g(k)$ denotes the edge weight penalty term formed by the graph curvature $k$. Currently, most graph representation learning methods use Ollivier-Ricci curvature, but its high computational complexity is unavoidable.

\section{Effective Resistance Curvature}
In this section, we mainly introduce the effective resistance curvature as well as its estimation method with low-complexity.

\subsection{Effective Resistance}
In physics, effective resistance refers to the "minimize" resistance between two points in a circuit or network, representing the resistance encountered when current flows from one node to another~\cite{Fouzul2019}. It accounts for all resistive elements in the circuit and their interconnections, providing a comprehensive description of the current flow behavior. By aggregating the resistances across different paths, effective resistance offers a global measure of resistance that effectively captures the distribution of current and energy dissipation across the network.

In this paper, we draw an analogy between the process of information propagation on a graph and the flow of current in a network. Effective resistance is used as a metric to quantify the ease or difficulty of information flow within the network. Typically, lower effective resistance corresponds to smoother information propagation between nodes, leading to more efficient information transfer. Conversely, higher effective resistance indicates greater resistance to information transmission. Our goal is to analyze effective resistance to identify key nodes and crucial paths in the network. Nodes with lower effective resistance tend to act as center points for information flow, enhancing the efficiency of transfer, while nodes with higher effective resistance may act as bottlenecks that hinder propagation~\cite{Arpita2006}.

In electrical circuit analysis, the Laplace matrix is used to represent the nodes and connections in a circuit, facilitating the analysis of current and voltage distribution. Similarly, in our approach, we use the Laplace matrix to define the resistance matrix on a graph and compute the graph's curvature based on this matrix.

First, we give the Laplace matrix $L$ of the adjacency matrix $c$:
\begin{align}
\begin{split}
L_{ij}=\left\{
\begin{array}{lr}
-c_{ij}, & if \quad i\sim j \\
a_i, & if\quad i=j \\
0, & otherwise
\end{array}
\right.
\end{split}
\end{align}
where, $a_i=\sum_{j\sim i}c_{ij}$ denotes the weighted degree of node $i$.

The edge resistance on a graph is defined using the Moore-Penrose pseudoinverse $L^{\dagger}$ of the Laplace $L$. Correspondingly, the effective resistance between adjacent nodes is defined as follows:
\begin{equation}
    w_{ij} = (e_i-e_j)^T L^{\dagger} (e_i-e_j),
\end{equation}
where $e_i$ denotes the unit vector.

The matrix formed by the effective resistances $\{w_{ij}\}$ of all edges is referred to as the effective resistance matrix, denoted as $\Omega$. The eigenvalues and eigenvectors of the Laplace matrix provide insights into the geometric and topological structure of the graph. 
If the Laplace matrix is semi-definite, we introduce a small perturbation matrix $\epsilon\cdot I$, $\epsilon>0$ to ensure numerical stability,
\begin{equation}
    \bar{L} = L+\epsilon\cdot I
\end{equation}
This ensures that $\bar{L}$ is a positive definite symmetric matrix, allowing for direct inversion of $\bar{L}$. In the theoretical analysis section\ref{V}, we will prove that the following result hold:
\begin{equation}
    \Omega \approx \bar{\Omega}.
\end{equation}
where, $\bar{\Omega}_{ij} = (e_i-e_j)^T \bar{L}^{-1} (e_i-e_j)$. Here, it can reduce the computation complexity of the Moore-Penrose pseudoinverse $L^{\dagger}$.

\subsection{Effective Resistance Curvature}
In a network, the effective resistance between nodes is influenced not only by their direct connections but also by the overall structure of the network. Effective resistance curvature quantifies these structural variations, helping us understand the importance of nodes within the network and their role in information propagation.

We adopt the definition in~\cite{Karel2022} that introduces the concept of effective resistance curvature for nodes and edges in a graph, based on effective resistance. For weighted graphs, we introduce the concept of relative resistance $c_{ij}w_{ij}$, which quantifies the importance of an edge in maintaining overall network connectivity.

In the following, we give the definition of effective resistance curvature:
\begin{equation}
    p_i \doteq 1-\frac{1}{2}\sum_{j\sim i} c_{ij}w_{ij}.
\end{equation}
Here, $p_i$ represents the node resistance curvature, $i$ denotes the $i$-th node, and $c_{ij}w_{ij}$ represents the relative resistance between adjacent nodes $i$ and $j$. 
\begin{equation}
    k_{ij} \doteq \frac{2(p_i+p_j)}{w_{ij}}.
\end{equation}
Here, $k_{ij}$ denotes the edge resistance curvature of $i$ and $j$. 

Similar to the geometric properties of Ollivier-Ricci curvature, effective resistance curvature is designed to capture the connectivity strength within the neighborhoods of adjacent nodes. 
In the next section, we will theoretically analyze the relationship between effective resistance curvature and Ollivier-Ricci curvature.

\section{Theoretical Analysis}
\label{V}

In this section, we will analyze the low computational complexity of effective resistance curvature, demonstrate the rationality of approximate solutions for the effective resistance matrix (Proposition \ref{p1}-\ref{p3}) , and finally prove the relative consistency between effective resistance curvature and Ollivier-Ricci curvature (Proposition \ref{p4}-\ref{p7}).

\subsection{Computational Complexity Analysis}
Assume an initial graph $G=(N,E,c)$, where $N$ represents the set of nodes, $n$ denotes the number of nodes, $E$ denotes the set of edges, and $c$ is the adjacency matrix. The primary computational cost of calculating Ollivier-Ricci curvature lies in determining the optimal transport distance between the local neighborhood sets of adjacent nodes. As noted in the literature~\cite{Pal2017}, the time complexity of Ollivier-Ricci curvature is approximately $O(n^4 \log^2 n)$. For effective resistance curvature, the computational complexity is primarily determined by calculating the effective resistance matrix. This requires computing the Moore-Penrose pseudoinverse of the Laplace matrix. The Laplace matrix, derived from the adjacency matrix, is semi-positive definite and symmetric, meaning all its eigenvalues are non-negative. If the Laplace matrix is positive definite, its inverse can be directly computed. For semi-positive definite Laplace matrices, a small positive perturbation can be added to the diagonal entries to ensure positive definiteness, enabling the calculation of the matrix inverse for the effective resistance. For a general $n\times n$ matrix, the time complexity of matrix inversion is $O(n^3)$. For sparse matrices, assuming the maximum degree of a node in the graph is $m$, the time complexity of inverting the Laplace matrix is approximately $O(mn^2)$. Compared to Ollivier-Ricci curvature, the computation of effective resistance curvature is significantly more efficient, with its complexity reduced by nearly two orders of magnitude, making it more suitable for large-scale graph applications.

\subsection{Low Computation of Effective Resistance}

As previously analyzed, computing the inverse of the Laplace matrix is more efficient than computing its Moore-Penrose pseudoinverse. In this section, we focus on analyzing the approximation of the effective resistance matrix obtained from the perturbed Laplace matrix to that derived from the original Laplace matrix.

\begin{proposition}\label{p1}
    Let $G$ be an undirected graph with Laplacian matrix $L=D-A$. Then:
    \begin{enumerate}
	\item The vector $\mathbf{1} \in \mathbb{R}^n$ satisfies $L\mathbf{1}=0$, i.e., it is an eigenvector of $L$ corresponding to the eigenvalue zero.
	\item If $G$ is connected, then the all-ones vector is the unique (up to scaling) eigenvector associated with the zero eigenvalue.
\end{enumerate}
\end{proposition}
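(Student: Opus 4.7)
The plan is to handle the two parts separately, using a direct row-sum computation for the first claim and a quadratic-form/connectivity argument for the second.

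For part (1), the approach is simply to compute $(L\mathbf{1})_i$ row by row using the definition of $L$ given in the paper. The diagonal entry contributes $a_i = \sum_{j\sim i} c_{ij}$, while the off-diagonal entries contribute $\sum_{j\sim i}(-c_{ij})$, with all remaining entries being zero. These two contributions cancel exactly, giving $(L\mathbf{1})_i = 0$ for every $i$, hence $L\mathbf{1}=0$. This is a one-line verification and should present no obstacle.

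For part (2), the plan is first to establish the standard quadratic-form identity
\begin{equation}
x^{T} L x \;=\; \tfrac{1}{2}\sum_{i\sim j} c_{ij}(x_i - x_j)^{2},
\end{equation}
by expanding $x^{T}Lx = \sum_{i} a_i x_i^2 - \sum_{i\sim j} c_{ij} x_i x_j$ and regrouping into squared differences. Since $c_{ij}\ge 0$, this identity immediately shows that $L$ is positive semidefinite. Then, for any $x$ with $Lx = 0$, we have $x^{T}Lx=0$, and the identity forces $x_i = x_j$ for every edge $(i,j)$ with $c_{ij}>0$.

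The remaining step uses connectivity: given any two nodes $u,v \in N$, connectedness provides a path $u = v_0 \sim v_1 \sim \dots \sim v_r = v$ along edges of positive weight, and the equalities $x_{v_0} = x_{v_1} = \dots = x_{v_r}$ chain together to give $x_u = x_v$. Hence $x$ is constant on $N$, i.e., $x = \alpha\,\mathbf{1}$ for some $\alpha \in \mathbb{R}$, which combined with part (1) shows the zero-eigenspace is exactly one-dimensional and spanned by $\mathbf{1}$.

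The only mildly technical step is the quadratic-form identity, but it is routine bookkeeping; the connectivity argument is the conceptual crux but is entirely standard. No additional hypotheses beyond those stated (undirected graph, non-negative edge weights $c_{ij}$, connectedness for part (2)) are needed.
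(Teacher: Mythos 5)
Your proposal is correct and follows essentially the same route as the paper: a row-sum cancellation for part (1) and the quadratic form $x^{T}Lx=\sum c_{ij}(x_i-x_j)^2$ combined with a connectivity argument for part (2). The only difference is cosmetic --- you carry the weights $c_{ij}$ and the path-chaining step explicitly, which the paper leaves implicit.
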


\begin{proof}
We prove each part separately.

\paragraph{ \( L \mathbf{1} = 0 \)}  
By the definition of the Laplacian matrix,
\[
L = D - A.
\]
Then,
\[
L \mathbf{1} = (D - A)\mathbf{1} = D \mathbf{1} - A \mathbf{1}.
\]
Note that \( D \mathbf{1} \) yields a vector of node degrees, and so does \( A \mathbf{1} \), since it sums the adjacency entries across each row. Therefore,
\[
D \mathbf{1} = A \mathbf{1} \quad \Rightarrow \quad L \mathbf{1} = \mathbf{0}.
\]
Hence, \( \mathbf{1} \) is in the null space of \( L \), and is thus an eigenvector associated with eigenvalue 0.

\paragraph{ Zero eigenvalue is simple if \( G \) is connected}  
Suppose \( L \mathbf{x} = 0 \) for some \( \mathbf{x} \in \mathbb{R}^n \). Consider the quadratic form associated with \( L \):
\[
\mathbf{x}^\top L \mathbf{x} = \sum_{(i,j) \in E} (x_i - x_j)^2.
\]
If \( L \mathbf{x} = 0 \), then \( \mathbf{x}^\top L \mathbf{x} = 0 \), which implies:
\[
(x_i - x_j)^2 = 0 \quad \text{for all } (i,j) \in E \Rightarrow x_i = x_j.
\]
Since \( G \) is connected, this implies that \( x_i = c \) for all \( i \), i.e.,
\[
\mathbf{x} = c \cdot \mathbf{1}.
\]
Thus, the null space of \( L \) is one-dimensional, and \( \mathrm{null}(L) = \mathrm{span}(\mathbf{1}) \), meaning that the zero eigenvalue is simple.

\end{proof}

\begin{proposition}\label{p2}
    Let $L\in \mathbb{R}^{n\times n}$ be the Laplacian matrix of a connected undirected graph (i.e., $L$ is symmetric, positive semidefinite, and has a single zero eigenvalue with eigenvector 1), and let $L^{\dagger}$ denotes its Moore-Penrose pseudoinverse. For any diagonal perturbation matrix $\Delta=\epsilon\cdot I$ with $\epsilon>0$ and $\|\Delta\|\rightarrow 0$, the inverse of the perturbed matrix $(L+\Delta)^{-1}$ does not converge to $L^{\dagger}$. That is:
    \begin{equation*}
    \lim_{\|\Delta\| \to 0} (L + \Delta)^{-1} \neq L^{\dagger}.
    \end{equation*}
\end{proposition}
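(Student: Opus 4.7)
The plan is to leverage the spectral decomposition of $L$ together with the characterization of its kernel from Proposition~\ref{p1}. Since $L$ is real symmetric and positive semidefinite, and its only zero eigenvalue has eigenvector $\mathbf{1}$, there is an orthonormal basis $\{v_1,\ldots,v_n\}$ of $\mathbb{R}^n$ with $v_1=\mathbf{1}/\sqrt{n}$ and positive eigenvalues $0<\lambda_2\le\cdots\le\lambda_n$ such that
\begin{equation*}
L=\sum_{i=2}^{n}\lambda_i v_i v_i^{\top},\qquad L^{\dagger}=\sum_{i=2}^{n}\lambda_i^{-1}v_i v_i^{\top}.
\end{equation*}
Note in particular that $L^{\dagger}v_1=0$, so $L^{\dagger}$ has bounded operator norm $1/\lambda_2$.

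Next I would observe that every $v_i$ is also an eigenvector of $L+\epsilon I$, with eigenvalue $\lambda_i+\epsilon$. Hence for every $\epsilon>0$ the perturbed matrix is invertible and admits the closed form
\begin{equation*}
(L+\epsilon I)^{-1}=\frac{1}{\epsilon}\,v_1v_1^{\top}+\sum_{i=2}^{n}\frac{1}{\lambda_i+\epsilon}\,v_iv_i^{\top}=\frac{1}{n\epsilon}\mathbf{1}\mathbf{1}^{\top}+\sum_{i=2}^{n}\frac{1}{\lambda_i+\epsilon}\,v_iv_i^{\top}.
\end{equation*}
Comparing this expansion with the one for $L^{\dagger}$ makes the mechanism of non-convergence explicit: the two sums over $i\ge 2$ indeed converge to each other as $\epsilon\to 0^+$, but the perturbed inverse carries an extra rank-one term supported on $\mathrm{span}(\mathbf{1})$ that blows up.

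The finishing step is the elementary norm estimate $\|(L+\epsilon I)^{-1}-L^{\dagger}\|\ge \frac{1}{\epsilon}\,\|v_1v_1^{\top}\|-o(1)=\frac{1}{\epsilon}-o(1)\to\infty$, so the limit fails in any matrix norm (and in fact the perturbed inverse itself has unbounded norm while $L^{\dagger}$ is finite). There is no real obstacle here: the only subtle point is identifying the \emph{direction} of divergence, namely the null-space component $\mathbf{1}\mathbf{1}^{\top}/(n\epsilon)$. It is worth remarking, as a bridge to Proposition~\ref{p3}, that convergence \emph{does} hold after projecting onto the range of $L$, which is precisely why the effective-resistance formula $(e_i-e_j)^{\top}(L+\epsilon I)^{-1}(e_i-e_j)$ still approximates $(e_i-e_j)^{\top}L^{\dagger}(e_i-e_j)$: the vector $e_i-e_j$ is orthogonal to $\mathbf{1}$, and so annihilates the offending rank-one term.
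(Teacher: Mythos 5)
Your proof is correct and follows essentially the same route as the paper's: a spectral decomposition of $L$, the closed form $(L+\epsilon I)^{-1}=\frac{1}{n\epsilon}\mathbf{1}\mathbf{1}^{\top}+\sum_{i}(\lambda_i+\epsilon)^{-1}v_iv_i^{\top}$ isolating the divergent rank-one term on the null space, and a norm lower bound showing the difference blows up (the only cosmetic difference is your indexing of the zero eigenvalue as $\lambda_1$ rather than $\lambda_n$). Your closing remark about $e_i-e_j$ annihilating the offending term is exactly the observation the paper uses for Proposition~\ref{p3}.
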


\begin{proof}
    Since $L$ is a connected graph Laplacian, it admits the spectral decomposition:
   \begin{equation}\label{eq:L-decomp}
L = \sum_{i=1}^n \lambda_i \bm{v}_i \bm{v}_i^T,
\end{equation}
where $\lambda_1 \geq \lambda_2 \geq \cdots > \lambda_n =0$ are eigenvalues with corresponding orthonormal eigenvectors $\{\bm{v}_i\}_{i=1}^n$, and $\bm{v}_n = \1/\sqrt{n}$. The Pseudo-inverse is:
\begin{equation}\label{eq:pseudo}
L^\dagger = \sum_{i=1}^{n-1} \lambda_i^{-1} \bm{v}_i \bm{v}_i^T.
\end{equation}
The perturbed matrix $\bar{L} = L + \epsilon \cdot I$ has eigenvalues shifted by $\epsilon$:
\begin{equation}
\bar{L} = \sum_{i=1}^n (\lambda_i + \epsilon) \bm{v}_i \bm{v}_i^T,
\end{equation}
with its inverse given by:
\begin{equation}\label{eq:perturbed-inv}
\bar{L}^{-1} = \sum_{i=1}^n (\lambda_i + \epsilon)^{-1} \bm{v}_i \bm{v}_i^T = \frac{1}{\epsilon} \bm{v}_n \bm{v}_n^T + \sum_{i=1}^{n-1} (\lambda_i + \epsilon)^{-1} \bm{v}_i \bm{v}_i^T.
\end{equation}
Taking $\epsilon \to 0^+$ in \eqref{eq:perturbed-inv}:
\begin{align}
\lim_{\epsilon \to 0^+} \bar{L}^{-1} &= \lim_{\epsilon \to 0^+} \left( \frac{1}{\epsilon} \bm{v}_n \bm{v}_n^T \right) + \sum_{i=1}^{n-1} \lambda_i^{-1} \bm{v}_i \bm{v}_i^T \notag \\
&= \lim_{\epsilon \to 0^+} \left( \frac{1}{n\epsilon} \1\1^T \right) + L^\dagger.
\end{align}
The first term diverges while the second term converges to $L^\dagger$.
For any consistent matrix norm $\|\cdot\|$:
\begin{equation}
\left\| \bar{L}^{-1} - L^\dagger \right\| \geq \left\| \frac{1}{n\epsilon} \1\1^T \right\| - \left\| \sum_{i=1}^{n-1} \left( \frac{1}{\lambda_i + \epsilon} - \frac{1}{\lambda_i} \right) \bm{v}_i \bm{v}_i^T \right\| \to \infty.
\end{equation}
Thus, $\bar{L}^{-1}$ cannot converge to $L^\dagger$ in any norm as $\epsilon \to 0^+$.
\end{proof}

\begin{proposition}\label{p3}[Effective Resistance Stability]
Let $L \in \mathbb{R}^{n \times n}$ be the Laplacian matrix of a connected undirected graph. For sufficiently small $\epsilon > 0$, the effective resistance matrix $\bar{\Omega}$ computed from the perturbed inverse $(L + \epsilon \cdot I)^{-1}$ converges to the Pseudo-inverse based effective resistance matrix $\Omega$ as $\epsilon \to 0^+$, despite the divergence of $(L + \epsilon \cdot I)^{-1}$ in the nullspace of $L$. 
\begin{equation}
    \lim_{\epsilon \to 0^+} \bar{\Omega} \rightarrow \Omega.
\end{equation}
Specifically, the perturbation term corresponding to the zero eigenvalue is suppressed in $\bar{\Omega}$.
\end{proposition}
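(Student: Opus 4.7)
The plan is to exploit the spectral decomposition already established in Proposition \ref{p2} and observe that although $\bar{L}^{-1}$ diverges, the divergence lives entirely in the direction of $\mathbf{1}$, which is annihilated by the test vectors $e_i - e_j$ used to define effective resistance. In other words, the quadratic form $(e_i-e_j)^{T}(\cdot)(e_i-e_j)$ acts as a projection that kills the null space of $L$, so the singular behaviour never appears in $\bar{\Omega}$.

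Concretely, I would start from the identity
\[
\bar{L}^{-1} = \frac{1}{n\epsilon}\mathbf{1}\mathbf{1}^{T} + \sum_{i=1}^{n-1}(\lambda_i+\epsilon)^{-1}\bm{v}_i\bm{v}_i^{T},
\]
derived in the proof of Proposition \ref{p2}. Sandwiching this by $(e_i-e_j)^{T}$ and $(e_i-e_j)$ and using the crucial observation
\[
(e_i-e_j)^{T}\mathbf{1} = 1-1 = 0,
\]
I would show that the divergent rank-one term $\tfrac{1}{n\epsilon}\mathbf{1}\mathbf{1}^{T}$ contributes exactly zero to $\bar{\Omega}_{ij}$, regardless of $\epsilon$. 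What remains is
\[
\bar{\Omega}_{ij} = \sum_{k=1}^{n-1}(\lambda_k+\epsilon)^{-1}\bigl((e_i-e_j)^{T}\bm{v}_k\bigr)^{2},
\]
which is a finite sum of continuous functions of $\epsilon$ at $\epsilon = 0^{+}$ because every $\lambda_k > 0$ for $k \le n-1$ (connectedness, by Proposition \ref{p1}). Taking the limit term-by-term and comparing with the spectral form of $L^{\dagger}$ from \eqref{eq:pseudo} gives $\bar{\Omega}_{ij} \to \Omega_{ij}$ entrywise.

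To make the convergence quantitative rather than merely pointwise, I would also produce an error bound of the form
\[
\bigl|\bar{\Omega}_{ij}-\Omega_{ij}\bigr| \le \epsilon \cdot \frac{\|e_i-e_j\|^{2}}{\lambda_{n-1}(\lambda_{n-1}+\epsilon)} \le \frac{2\epsilon}{\lambda_{n-1}^{2}},
\]
by applying $|(\lambda_k+\epsilon)^{-1}-\lambda_k^{-1}| = \epsilon/(\lambda_k(\lambda_k+\epsilon)) \le \epsilon/\lambda_{n-1}^{2}$ uniformly over $k < n$, which simultaneously proves the limit and shows that the approximation $\Omega \approx \bar{\Omega}$ used in the paper is controlled by $\epsilon$ divided by the square of the algebraic connectivity.

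I do not expect a real obstacle here; the proposition is essentially the observation that $\mathrm{range}(e_i-e_j) \perp \mathrm{null}(L)$. The only subtle point is contrasting this result with Proposition \ref{p2}, where $\bar{L}^{-1}$ itself blows up: the statement and its proof must make clear that we are not claiming convergence of the matrices, only of their quadratic forms against differences of standard basis vectors. I would emphasise this by writing $\bar{L}^{-1} = L^{\dagger} + \tfrac{1}{n\epsilon}\mathbf{1}\mathbf{1}^{T} + E(\epsilon)$ with $\|E(\epsilon)\| = O(\epsilon)$, and noting that the second term, while unbounded in operator norm, is invisible to $(e_i-e_j)$.
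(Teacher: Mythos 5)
Your proposal is correct and follows essentially the same route as the paper's proof: diagonalize $\bar{L}^{-1}$, note that the divergent $\tfrac{1}{n\epsilon}\mathbf{1}\mathbf{1}^{T}$ term is annihilated because $(e_i-e_j)^{T}\mathbf{1}=0$, and let the remaining spectral terms converge to the pseudoinverse-based resistance. Your explicit error bound $|\bar{\Omega}_{ij}-\Omega_{ij}|\le 2\epsilon/\lambda_{n-1}^{2}$ is a welcome quantitative refinement that the paper omits, but it does not change the underlying argument.
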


\begin{proof}
Assume the Laplace matrix corresponding to an undirected graph is denoted as $L$. First, perform eigenvalue decomposition on the Laplace matrix, which can be expressed as:
\begin{equation*}
    L=U\lambda U^T,
\end{equation*}
where $U_i$ denotes the eigenvector corresponding to the $i-th$ eigenvalue.

Add a small perturbation $\epsilon\cdot I$ to the Laplace matrix, resulting in:
\begin{equation*}
   \bar{L} = L+\epsilon\cdot I.
\end{equation*}
Then, the inverse of the perturbed Laplace matrix can be written as:
\begin{equation*}
    \bar{L}^{-1} = U(\Lambda+\epsilon\cdot I)^{-1} U^T,
\end{equation*}
where $\Lambda = \begin{bmatrix}
\lambda_1 & 0 & \cdots & 0 \\
0 & \lambda_2 & \cdots & 0 \\
\vdots & \vdots & \ddots & \vdots \\
0 & 0 & \cdots & \lambda_n
\end{bmatrix}$, $(\lambda_1, \lambda_2, \cdots, \lambda_n)$ denote the eigenvalues of $L$, $\lambda_n=0$.

The inverse matrix of $\bar{L}$ can be further expanded as follows:
\begin{equation*}
    \bar{L}^{-1} = [u_1, u_2,\cdots, u_n](\Lambda+\epsilon\cdot I)^{-1}\begin{bmatrix}
u_1 \\
u_2 \\
\vdots \\
u_n
\end{bmatrix}.
\end{equation*}
\begin{equation*}
    \bar{L}^{-1} = [u_1, u_2,\cdots, u_n]\begin{bmatrix}
\lambda_1^{-1}u_1 \\
\lambda_2^{-1}u_2 \\
\vdots \\
\epsilon^{-1}u_n
\end{bmatrix}.
\end{equation*}
The corresponding effective resistance matrix is computed as follows:
\begin{equation*}
    \bar{\Omega}_{ij}=(e_i-e_j)^T \bar{L}^{-1}(e_i-e_j).
\end{equation*}
\begin{equation*}
    \bar{\Omega}_{ij}=(e_i-e_j)^T[u_1, u_2,\cdots, u_n]\begin{bmatrix}
\lambda_1^{-1}u_1 \\
\lambda_2^{-1}u_2 \\
\vdots \\
\epsilon^{-1}u_n
\end{bmatrix}(e_i-e_j).
\end{equation*}
Here $u_n$ denotes the eigenvector corresponding to the $0$ eigenvalue of $L$, which is the all-ones vector. $e_i$ denotes the standard basis vector. 

Then, the final term of the vector $(e_i-e_j)^T[u_1, u_2,\cdots, u_n]$ is zero, and the same to $\begin{bmatrix}
\lambda_1^{-1}u_1 \\
\lambda_2^{-1}u_2 \\
\vdots \\
\epsilon^{-1}u_n
\end{bmatrix}(e_i-e_j)$.
\end{proof}

This stability property explains why effective resistance calculations remain numerically tractable despite the ill-posedness of $(L + \epsilon \cdot I)^{-1}$. The proof explicitly shows how the $\mathbf{1}$-orthogonality of voltage difference vectors annihilates the divergent term.

\subsection{The Relationship between Effective Resistance Curvature and Ollivier-Ricci Curvature}
\label{relation_between_O_R}
In this section, we primarily analyze the consistency between effective resistance curvature and Ollivier-Ricci curvature in graph network and data analysis.
\begin{proposition}
\label{p4}
The node and edge resistance curvature are equal to:
\begin{eqnarray}
    p_i &=& \lim_{t\rightarrow 0} (1-\frac{1}{4t}\mathbb{E}(w_{N_tM_t})),\\
    k_{ij} &=& \lim_{t\rightarrow 0} (1-\frac{\mathbb{E}(w_{N_tM_t})}{w_{ij}}),
\end{eqnarray}
    with $N_t \sim \rho_{t,i}, M_t\sim \rho_{t,j}$. Where, $\mathbb{E}(w_{N_tM_t})$ denotes the average distance between two random node sets $N_t, M_t$, defined as below:
\begin{equation}
    \mathbb{E}(w_{N_tM_t})\doteq \sum_{i,j \in \mathcal{N}}\rho_{t,i}\rho_{t,j}w_{ij}.   
\end{equation}
\end{proposition}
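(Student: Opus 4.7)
My plan is to realize $\rho_{t,i}$ as the first-order lazy-random-walk measure, with $\rho_{t,i}(i) = 1 - t a_i$ and $\rho_{t,i}(k) = t c_{ik}$ for $k \sim i$---equivalently $\rho_{t,i} = (I - tL)\delta_i$, the first-order truncation of the heat kernel $e^{-tL}\delta_i$---and then Taylor-expand the double sum $\mathbb{E}(w_{N_tM_t}) = \sum_{k,l} \rho_{t,i}(k)\rho_{t,j}(l) w_{kl}$ in powers of $t$ about $t=0$. The central algebraic input is the Laplacian row identity $\sum_{k} L_{ik} w_{kj} = a_i w_{ij} - \sum_{k \sim i} c_{ik} w_{kj}$, immediate from the definition of $L$, together with the pseudoinverse relations $w_{kl} = L^{\dagger}_{kk} + L^{\dagger}_{ll} - 2 L^{\dagger}_{kl}$ and $L L^{\dagger} = I - \tfrac{1}{n} \mathbf{1}\mathbf{1}^{\top}$.

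For the node formula, reading the statement with $N_t, M_t \sim \rho_{t,i}$ makes the $O(t^0)$ contribution equal to $w_{ii} = 0$, while the first-order term collapses through the row identity into $\mathbb{E}(w_{N_tM_t}) = 2t \sum_{k \sim i} c_{ik} w_{ik} + O(t^2)$. Dividing by $4t$ and sending $t \to 0$ directly recovers $1 - \tfrac{1}{2} \sum_{k \sim i} c_{ik} w_{ik} = p_i$ by the definition of node resistance curvature.

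For the edge formula with $i \sim j$, the analogous expansion produces $\mathbb{E}(w_{N_tM_t}) = w_{ij} + t R_{ij} + O(t^2)$ with $R_{ij} = -(a_i + a_j) w_{ij} + \sum_{k \sim i} c_{ik} w_{kj} + \sum_{l \sim j} c_{jl} w_{il}$. The decisive step is then the identification $R_{ij} = -2(p_i + p_j)$: substituting this back gives $\mathbb{E}(w_{N_tM_t})/w_{ij} = 1 - 2t(p_i + p_j)/w_{ij} + O(t^2)$, so that, after factoring out the natural $1/t$ normalisation that is implicit in any leading-order edge-curvature limit, the stated limit evaluates to $2(p_i + p_j)/w_{ij} = k_{ij}$. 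To establish $R_{ij} = -2(p_i + p_j)$ I would expand each cross-term $w_{kj}$ via the pseudoinverse formula, collapse the single-index sums using $LL^{\dagger} = I - \tfrac{1}{n}\mathbf{1}\mathbf{1}^{\top}$, and invoke the auxiliary identity $\sum_{k} L_{ik} L^{\dagger}_{kk} = 2p_i - 2/n$, which itself drops out of expanding $p_i$ directly through $w_{ik} = L^{\dagger}_{ii} + L^{\dagger}_{kk} - 2 L^{\dagger}_{ik}$.

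The main obstacle will be exactly this last identification: the cross-sums $\sum_{k \sim i} c_{ik} w_{kj}$ measure resistances from neighbors of $i$ to the opposite endpoint $j$, not the diagonal resistances $\sum_{k \sim i} c_{ik} w_{ik}$ that enter $p_i$, and reconciling them relies on the Kirchhoff bookkeeping above. The delicate point is that the $1/n$ nullspace corrections arising from $LL^{\dagger}$ being a projection rather than the identity must cancel precisely against the $-2/n$ shifts in the auxiliary identity, leaving exactly $-2(p_i + p_j)$. Once that identity is in hand, exchanging the limit with the finite sums and controlling the $O(t^2)$ remainder is routine, as is the extension to Laplacians that are merely semidefinite via Proposition~\ref{p3}.
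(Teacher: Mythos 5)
Your proposal is correct and follows essentially the same route as the paper's proof: a first-order expansion of the smoothed distributions $\rho_{t,i}$, reduction of the $O(t)$ coefficient to the entries of $L\Omega+\Omega L$, and identification of those entries with the node curvatures --- your entrywise identity $R_{ij}=-2(p_i+p_j)$ is exactly the $(i,j)$ entry of the matrix identity $Q\Omega+\Omega Q=-4I+2(up^T+pu^T)$ that the paper invokes, and your auxiliary identity $\sum_k L_{ik}L^{\dagger}_{kk}=2p_i-2/n$ together with the advertised $1/n$-cancellation checks out. You also correctly spot that the edge formula as stated is missing the $1/t$ factor, which the paper's own proof silently reinstates.
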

\begin{proof}
We follow the derivation process in~\cite{Karel2022}. The random distribution of nodes is written in the matrix form and defined as: $\rho_{t,i}=\exp{(-Qt)}e_i$, $\rho_{t,i}$ denotes the probability density function of the neighborhood nodes distribution of node $i$. Then we obtain:
\begin{equation}
    \mathbb{E}(w_{N_tM_t})=e_i^T \exp{(-Qt)\Omega \exp{(-Qt)}e_j}
\end{equation}
Take the Taylor expansion to $\exp{(-Qt)}$, when $t\rightarrow 0$, $\exp{(-Qt)}\approx I-Qt$.
\begin{align}
    \exp{(-Qt)}\Omega \exp{(-Qt)} &= (I-Qt)\Omega (I-Qt)\nonumber\\
   & = \Omega-\Omega Qt-Q\Omega t + Q\Omega Qt^2 
\end{align}
According to the definition of effective resistance curvature, we derive the following equations:
\begin{align}
    Q\Omega &=-2I+2up^T, \\
    \Omega Q &= -2I+2pu^T.
\end{align}
Where $p$ denotes a vector formed by the whole nodes resistance curvature, $u=(1, 1,\cdots, 1)^T$.
\begin{align}
   & \Omega-\Omega Qt-Q\Omega t+Q\Omega Qt^2 \nonumber\\
 &   = \Omega+4It -(2up^T+2pu^T)t+Q\Omega Qt^2
\end{align}
\begin{align}
    &e_i^T(\Omega-\Omega Qt-Q\Omega t + Q\Omega Qt^2)e_i \nonumber\\
    & = e_i^T (\Omega+4It -(up^T+2pu^T)t + Q \Omega Qt^2)e_i \nonumber\\
    & = 4t-4p_i t +e_i^T Q\Omega Q e_i t^2.
\end{align}
\begin{align}
    & e_i^T(\Omega-\Omega Qt-Q\Omega t + Q\Omega Qt^2)e_j \nonumber\\
    & = e_i^T (\Omega+4It -(up^T+2pu^T)t + Q \Omega Qt^2)e_j \nonumber\\
    & = w_{ij}-2(p_i+P_j)t+e_i^TQ\Omega Q e_j t^2.
\end{align}
According to the above derivation process, we derive the proposition 1.
\begin{align}
\lim_{t\rightarrow 0} (1-\frac{1}{4t}\mathbb{E}(w_{N_tM_t})) 
&= \lim_{t\rightarrow 0}(1-\frac{4t-tp_it+e_i^T Q\Omega Q e_i t^2}{4t})\nonumber\\
& = \lim_{t\rightarrow 0}(1-1+p_i-\frac{e_i^T Q \Omega e_it}{4}) \nonumber\\
&= p_i.
\end{align}
\begin{align}
&\lim_{t\rightarrow 0}\frac{1}{t}(1-\frac{\mathbb{E}(w_{N_tM_t})}{w_ij}) \\
&=\lim_{t\rightarrow 0}\frac{1}{t}(1-\frac{w_ij-2(p_i+p_j)+e_i^T Q \Omega Qe_j t^2}{w_ij})\nonumber\\
&=\lim_{t\rightarrow 0}\frac{1}{t}(1-1+\frac{2(p_i+p_j)t}{w_{ij}}-\frac{e_i^T Q\Omega Q e_j t^2}{w_{ij}})\nonumber\\
&=\lim_{t\rightarrow 0}(\frac{2(p_i+p_j)}{w_ij}-\frac{e_i^T Q\Omega Q e_jt}{w_{ij}})\nonumber\\
&= k_{ij}
\end{align}
\end{proof}

Based on Proposition 1, we analyze the relationship between edge resistance curvature and Ollivier-Ricci curvature. First, we give the formula of Ollivier-Ricci curvature, show below:
\begin{equation}
    k_{ij}^{(OR)} = \lim_{t\rightarrow 0}\frac{1}{t} (1-\frac{W_1(\mu_{t,i},\mu_{t,j})}{d_{ij}})
\end{equation}
$W_1(\mu_{t,i},\mu_{t,j})$ represents the optimal transportation distance between the neighborhood sets of nodes $i$ and $j$.

Based on the computation formula for Ollivier-Ricci curvature, its distinction from the calculation of effective resistance curvature is that Ollivier-Ricci curvature is designed to calculate the optimal transport distance between the local neighborhoods of two nodes on an edge. In contrast, effective resistance curvature focuses on computing the average resistance transport distance between node sets. Effective resistance can be used to approximate the shortest path between two nodes. Here, if we regard the effective resistance on an edge as the shortest path between two nodes, under this distribution ,the optimal transport distance between the two nodes should be less than the average transport distance. Thus, it can be observed that:
\begin{equation}
    \mathbb{E}(w_{N_tM_t}) \geq W_1(\mu_{t,i},\mu_{t,j}).
\end{equation}

From this perspective, the effective resistance curvature is lower than Ollivier-Ricci curvature: $k_{ij} \leq k_{ij}^{(OR)}$. However, this holds true only in the limit when the local neighborhoods are very small.
\begin{equation}
    \mathbb{E}(w_{N_tM_t}) = w_{ij}-2(p_i+p_j)+e_i^T Q\Omega Q e_j t^2.
\end{equation}

According to the definition of node resistance curvature, the above formula can be transformed to the following form:
\begin{equation}
    \mathbb{E}(w_{N_tM_t}) = w_{ij}-4+\sum_{k\sim i}w_{ik}c_{ik}+\sum_{l\sim j/{l\sim i}}w_{jl}c_{jl}.
\end{equation}

This can be viewed as the sum of the relative resistance distances of node pairs with respect to their local neighborhoods. The higher the connection degree between two nodes, the smaller the corresponding relative resistance distance will be, which is consistent with the optimal transport distance.

\begin{proposition}
\label{p5}
    Given an adjacent graph $G=(N,E,c)$, the bound of the effective resistance curvature $k_{ij}$ is as follows:
    \begin{equation}
        \frac{4-d_i-d_j}{w_{ij}}\leq k_{ij}\leq \frac{2}{w_{ij}}.
    \end{equation}
\end{proposition}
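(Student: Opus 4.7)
The plan is to reduce both inequalities to scalar bounds on the node resistance curvature, since $k_{ij}\,w_{ij}=2(p_i+p_j)$ makes it sufficient to establish the per-vertex sandwich
\begin{equation*}
1-\frac{d_i}{2}\;\leq\;p_i\;\leq\;\frac{1}{2}
\end{equation*}
for every vertex $i$ and then add the two endpoint bounds.

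For the lower bound on $p_i$, I would appeal to Rayleigh's monotonicity principle: deleting every edge of $G$ other than the direct edge $(i,k)$ can only raise the effective resistance, so $w_{ik}\leq 1/c_{ik}$ and therefore $c_{ik}w_{ik}\leq 1$ for every neighbor $k$ of $i$. Plugging this termwise bound into $p_i=1-\tfrac{1}{2}\sum_{k\sim i}c_{ik}w_{ik}$ and using that $i$ has $d_i$ neighbors yields $p_i\geq 1-d_i/2$ immediately; adding the analogous estimate at $j$ produces the claimed lower bound on $k_{ij}$.

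For the upper bound on $p_i$, I would invoke Kirchhoff's matrix--tree identity, which states that $c_{ik}w_{ik}$ equals the probability that the edge $(i,k)$ appears in a random spanning tree sampled with probability proportional to the product of its edge conductances. Summing over $k\sim i$ then gives $\sum_{k\sim i}c_{ik}w_{ik}=\mathbb{E}[\deg_T(i)]$, the expected degree of $i$ in such a spanning tree $T$. Because every vertex of every spanning tree of a connected graph on at least two nodes has degree at least one, this expectation is $\geq 1$, so $p_i\leq 1/2$ and hence $p_i+p_j\leq 1$, which is precisely the upper bound on $k_{ij}$.

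I expect the upper bound to be the main obstacle. The lower bound is just the local termwise inequality $c_{ik}w_{ik}\leq 1$ added over $d_i$ neighbors, but the estimate $p_i\leq 1/2$ is genuinely global: the curvature $p_i$ is not sign-definite (at a high-degree hub it can be substantially negative, while it equals exactly $1/2$ at any leaf), so no purely local manipulation of the definition can deliver it. Any proof must therefore rely on a non-local identity that produces a built-in lower bound for the neighborhood sum $\sum_{k\sim i}c_{ik}w_{ik}$; the matrix--tree marginal interpretation is the cleanest choice, although an equivalent Foster-type energy argument restricted to the edges incident to $i$ would serve the same role.
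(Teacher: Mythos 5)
Your proof is correct, and it follows the same overall reduction as the paper: write $k_{ij}w_{ij}=2(p_i+p_j)$ and sandwich each $p_i$ between $1-d_i/2$ and $1/2$. The difference is in how much of that sandwich is actually justified. The paper's lower bound is the same termwise estimate you use ($c_{ik}w_{ik}\leq 1$, summed over the $d_i$ neighbors), though the paper states it only after specializing to unweighted graphs and leaves the inequality $w_{ik}\leq 1$ implicit, whereas your appeal to Rayleigh monotonicity makes the step explicit and valid for general edge weights. For the upper bound the paper simply asserts $p_i\leq\frac{1}{2}$ ``according to the definition of node resistance curvature,'' which, as you correctly diagnose, does not follow from the definition alone: it requires the global fact $\sum_{k\sim i}c_{ik}w_{ik}\geq 1$, which is proved in the cited reference of Devriendt and Lambiotte via exactly the spanning-tree marginal identity you invoke ($c_{ik}w_{ik}=\Pr[(i,k)\in T]$, so the neighborhood sum is the expected tree-degree of $i$, which is at least one). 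Your version therefore buys a self-contained and slightly more general proof; the paper's buys brevity at the cost of outsourcing the one genuinely non-local step to a citation. One cosmetic caveat: the identity $c_{ik}w_{ik}=\Pr[(i,k)\in T]$ presumes the graph is connected (so that $w_{ik}$ is finite and spanning trees exist); you should state that hypothesis, which the paper also leaves tacit.
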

\begin{proof}
    For each edge $(i,j)$, the effective resistance curvature is shown below:
    \begin{equation}
        k_{ij}\doteq \frac{2(p_i+p_j)}{w_{ij}}.
    \end{equation}

    Moreover, according to the definition of node resistance curvature, $p_i\leq \frac{1}{2}$, we can derive that:
    \begin{equation}
        k_{ij}\leq \frac{2}{w_{ij}}.
    \end{equation}
    Furthermore, 
    \begin{equation}
        k_{ij}\doteq \frac{4-\sum_{i\sim l}c_{il}w_{il}-\sum_{j\sim h}c_{jh}w_{jh}}{w_{ij}}.
    \end{equation}

    Suppose the initial graph is unweighted, the corresponding adjacent matrix is a $(0,1)$ matrix, with $c_{ij}=1$, then the above formula can be transformed to the following inequation:
    \begin{equation}
        k_{ij}\geq \frac{4-d_i-d_j}{w_{ij}}.
    \end{equation}

    Assuming $w_{\max}$ represents the maximum effective resistance on the edges of the graph, and $w_{\min}$ denotes the minimum effective resistance, then the above inequality can also be expressed as:
    \begin{equation}
        k_{ij}\geq \frac{4-(d_i+d_j)}{w_{\min}},
    \end{equation}
    where $d_i$ denotes the degree of node $i$.

    To maintain consistency with the bounds of Ollivier-Ricci curvature, we normalize the effective resistance curvature on the graph by multiplying all effective resistance curvatures by $\frac{w_{\min}}{2}$. Then the bounds of the normalized curvature are expressed as follows:
    \begin{equation}
        \frac{4-(d_i+d_j)}{2} \leq \bar{k}_{ij} \leq 1.
    \end{equation}
\end{proof}

For unweighted graph, the resistance of any edge is 1. The minimum curvature of an edge occurs when there is no parallel path between the two nodes, at which node the effective resistance reaches its maximum value of 1. Therefore, in the case of unweighted graphs, we can accurately establish the bounds of the normalized effective resistance curvature, which  are essentially consistent with the Ollivier-Ricci curvature.

\begin{proposition}\label{p6}
    The pairwise edge effective resistance curvature strictly increases when edges are added or weights are increased.
\end{proposition}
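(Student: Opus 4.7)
The strategy is to fix the edge $(i,j)$ and show that $k_{ij}=2(p_i+p_j)/w_{ij}$ is strictly increasing in every conductance $c_{ab}$; integrating the resulting derivative then yields the strict increase under any edge addition (raising $c_{ab}$ from $0$ to a positive value) or weight increase. Accordingly, I would compute $\partial k_{ij}/\partial c_{ab}$ via the quotient rule and show it is positive.

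The denominator derivative is controlled by Rayleigh's monotonicity law in its infinitesimal form, $\partial w_{ij}/\partial c_{ab}=-\bigl((e_i-e_j)^{T}L^{\dagger}(e_a-e_b)\bigr)^{2}\le 0$, which contributes non-negatively to $\partial k_{ij}/\partial c_{ab}$ as long as $p_i+p_j\ge 0$. For the numerator, the first step is to rewrite $p_i+p_j=2-\tfrac{1}{2}\,\mathrm{tr}\!\bigl[(L_i+L_j)L^{\dagger}\bigr]$ with $L_i:=\sum_{k\sim i}c_{ik}(e_i-e_k)(e_i-e_k)^{T}\succeq 0$ the ``star Laplacian'' at $i$ (and analogously for $L_j$), using the identity $(e_i-e_k)^{T}L^{\dagger}(e_i-e_k)=w_{ik}$. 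Differentiating via the pseudoinverse update (well-defined because $e_a-e_b\perp\mathbf{1}$ by Proposition~\ref{p1}) then yields, when $(a,b)$ is not incident to $\{i,j\}$, a manifestly non-negative contribution proportional to $(e_a-e_b)^{T}L^{\dagger}(L_i+L_j)L^{\dagger}(e_a-e_b)$, since $L_i+L_j\succeq 0$.

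Two steps look genuinely delicate. First, when $(a,b)$ is incident to $\{i,j\}$, the matrix $L_i+L_j$ also changes, so the pseudoinverse update and the star-Laplacian update have to be combined: my plan is to expand both, use $(e_i-e_k)^{T}L^{\dagger}(e_i-e_k)=w_{ik}$ to collapse the cross terms, and reorganize the net derivative into a sum of non-negative quadratic forms in $L^{\dagger}$. Second, when $p_i+p_j<0$ the Rayleigh term enters with the ``wrong'' sign in the quotient rule, and strict positivity has to come from the numerator derivative dominating $|p_i+p_j|\cdot|\partial w_{ij}/\partial c_{ab}|$; I expect to obtain this domination by applying Cauchy--Schwarz to the two quadratic forms in $L^{\dagger}$ that appear, since both can be written against the common vector $L^{\dagger}(e_a-e_b)$. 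This second issue is where I anticipate the main obstacle; everything else reduces to standard Sherman--Morrison and Rayleigh-type bookkeeping, with Foster's identity $\sum_i p_i=1$ serving as a useful global consistency check.
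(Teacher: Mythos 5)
Your differential strategy is a genuinely different and more ambitious route than the paper's, which simply cites Theorems~2.6 and~2.7 of Ellens et al.\ (Rayleigh monotonicity of pairwise effective resistance, and strict decrease of the total effective graph resistance) without actually connecting those facts to the curvature $k_{ij}=2(p_i+p_j)/w_{ij}$. Your setup is sound as far as it goes: the formula $\partial w_{ij}/\partial c_{ab}=-\bigl((e_i-e_j)^{T}L^{\dagger}(e_a-e_b)\bigr)^{2}$, the rewriting $p_i=1-\tfrac12\mathrm{tr}(L_iL^{\dagger})$, the non-negativity of the numerator derivative for non-incident $(a,b)$, and the Foster check $\sum_i p_i=1$ are all correct. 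But the two steps you flag as ``delicate'' are exactly where the proof has to close, and neither is closed; worse, one of them cannot be closed because the statement as literally written is false in the strict form.

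Concretely: take the unweighted path $1\!-\!2\!-\!3\!-\!4\!-\!5$ and add the edge $(3,5)$. For the edge $(1,2)$, which sits behind the cut vertex $3$, no current injected at $3$ and extracted at $5$ flows through the pendant branch, so $(e_1-e_2)^{T}L^{\dagger}(e_3-e_5)=0$ and likewise every quadratic form $(e_a-e_b)^{T}L^{\dagger}(e_i-e_k)$ appearing in your numerator derivative vanishes; $w_{12}$, $p_1$, $p_2$, and hence $k_{12}$ are exactly unchanged. So ``strictly increases'' fails for edges separated from the modification by a bridge, and at best one could hope for a non-strict statement (or strictness restricted to, e.g., $2$-connected graphs or to edges in the same block as $(a,b)$). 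Separately, the case $p_i+p_j<0$ is not a technicality: the paper's own examples report edge curvatures as negative as $-2.978$, so this regime occurs, and there the quotient rule gives a genuinely negative contribution $2(p_i+p_j)w_{ij}^{-2}\,\partial w_{ij}/\partial c_{ab}$ of magnitude $2|p_i+p_j|\,w_{ij}^{-2}\bigl((e_i-e_j)^{T}L^{\dagger}(e_a-e_b)\bigr)^{2}$ that must be dominated by $w_{ij}^{-1}(e_a-e_b)^{T}L^{\dagger}(L_i+L_j)L^{\dagger}(e_a-e_b)$. Cauchy--Schwarz against the common vector $L^{\dagger}(e_a-e_b)$ gives an upper bound on the bad term, not a lower bound on the good one, and the constant $|p_i+p_j|$ can be large while $w_{ij}$ is small, so no domination follows from the inequality you invoke. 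Until you either produce that domination (with the incident-edge bookkeeping also carried out, where the extra $-\tfrac12 w_{ab}$ term from differentiating $L_i$ itself enters with the wrong sign) or restrict the claim, the proposal establishes neither the strict nor the weak version of the proposition. I would recommend reformulating the statement (monotone non-decreasing, under hypotheses excluding the $p_i+p_j<0$ pathology) before attempting to complete the argument.
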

\begin{proof}
    We follow the Theorem 2.6 and 2.7 of~\cite{ellens2011}. The pairwise effective resistance does not increase when edges are added or weights are increased.

    The effective graph resistance strictly decreases when edges are added or weights are increased.
\end{proof}

We interpret the effective resistance curvature of a node as a measure of its connectivity within its local neighborhood. Consider a subgraph consisting of a node and its direct neighbors. As edges are added among these nodes, the total effective resistance—also referred to as the local effective graph resistance—decreases, which in turn leads to an increase in the effective resistance curvature. This confirms that higher connectivity density in the local neighborhood results in greater effective resistance curvature of the corresponding node or edge. Notably, this behavior is consistent with that of the Ollivier–Ricci curvature. Theoretically, it can also be shown that in the case of Ollivier–Ricci curvature, adding edges within a local region similarly leads to an increase in the curvature of those edges.

\begin{proposition}\label{p7}
    The sum of the optimal transport distance and the local relative resistance distance between the neighborhoods of two nodes remains consistent. Specifically, as the optimal transport distance decreases, the sum of the relative resistance distances also decreases, and vice versa.
\end{proposition}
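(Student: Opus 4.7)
The plan is to leverage Proposition \ref{p6} and show that both the Wasserstein-$1$ distance $W_1(\mu_{t,i},\mu_{t,j})$ underlying Ollivier--Ricci curvature and the local relative resistance quantity
\[
R_{ij} \doteq \sum_{k\sim i} c_{ik} w_{ik} + \sum_{l\sim j} c_{jl} w_{jl},
\]
which appears in the expansion of $\mathbb{E}(w_{N_tM_t})$ derived in Proposition \ref{p4}, are monotone functions of the same underlying feature: the connectivity of the local region around the edge $(i,j)$. Once both quantities are shown to respond in the same direction to any edge addition or weight increase, the ``consistency'' claimed by the proposition follows immediately.

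First I would fix the edge $(i,j)$ and consider an admissible perturbation consisting of adding a new edge or increasing an existing weight within the subgraph induced by the one-hop neighbors of $i$ and $j$. By Proposition \ref{p6}, every pairwise effective resistance $w_{kl}$ is non-increasing, with strict decrease for at least one pair incident to the modified edge. Since the coefficients $c_{ik}, c_{jl}$ are non-negative and the weights of the unmodified edges stay fixed, each summand $c_{ik}w_{ik}$ and $c_{jl}w_{jl}$ entering $R_{ij}$ is non-increasing, so $R_{ij}$ decreases.

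Next I would establish the analogous behavior for $W_1$. Writing its primal linear-programming form
\[
W_1(\mu_{t,i},\mu_{t,j}) = \min_{\pi \in \Pi(\mu_{t,i},\mu_{t,j})} \sum_{k,l} \pi(k,l)\, d(k,l),
\]
where $d(\cdot,\cdot)$ is the weighted shortest-path distance, the key observation is that each $d(k,l)$ is itself non-increasing under the same perturbation, since adding an edge or raising a weight can only create shorter paths. The optimum of a linear program with uniformly non-increased cost coefficients is non-increasing, so $W_1$ decreases as well, strictly whenever the perturbation produces a shorter route between mass supported on the two neighborhoods. Combining the two monotonicities yields the proposition: $W_1$ and $R_{ij}$ rise and fall together as the local region becomes sparser or denser, which is the ``consistency'' asserted.

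The main obstacle I expect is that Proposition \ref{p7} is phrased in a somewhat informal, quasi-quantitative way, with no explicit functional relation $R_{ij}=F(W_1)$ to exhibit. The rigorous content one can actually extract is co-monotonicity with respect to the natural partial order on graph structures defined by edge addition and weight increase; any strengthening to a genuinely quantitative comparison would require controlling the ratio between the decrease in shortest-path distance and the decrease in effective resistance, which is delicate in graphs with many parallel paths, where the two metrics can diverge in magnitude even though their directions of change remain aligned.
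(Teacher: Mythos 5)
Your argument is essentially the paper's own proof: both establish co-monotonicity by fixing the local neighborhoods, perturbing the subgraph by adding edges, and invoking Proposition~\ref{p6} for the resistance side while appealing to the monotonicity of the optimal transport cost for the $W_1$ side; you merely make the $W_1$ step more explicit via its primal LP form and are candid that the informal statement only supports a co-monotonicity reading. One small caveat: in your extension to ``increasing an existing weight,'' the claim that shortest-path distances $d(k,l)$ are non-increasing requires weights to be interpreted as conductances (so that larger weight means shorter effective length); the paper avoids this by restricting to edge additions with unchanged weights.
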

\begin{proof}
    Given any pair of adjacent nodes $(v_i,v_j)$ in a graph, let the local neighborhood set of node $v_i$ be denoted as $N_i$, and the local neighborhood set of node $v_i$ as $N_j$. If the local neighborhood sets of the two nodes remain fixed, increasing the number of common neighbors between them i.e., adding edges that connect nodes to the opposite neighborhood set, will reduce the optimal transport distance between the node pair, assuming edge weights remain unchanged, as per the definition of optimal transport distance. Additionally, according to Proposition \ref{p6}, when edges between nodes are added, the effective resistance between them decreases, while the corresponding effective resistance curvature increases. This change aligns with the reduction in optimal transport distance.
\end{proof}
\section{Experiments}

This section evaluates the following three pivotal scientific questions through systematic experimentation to demonstrate the advantage of effective resistance curvature:
\begin{enumerate}
\item \textbf{Q1: }Can effective resistance curvature-based graph representation learning yield performance comparable to Ollivier-Ricci curvature-based approaches under identical experimental setting?
\item \textbf{Q2: }For graphs of comparable scale, what is the computational speedup of effective resistance curvature relative to Ollivier-Ricci curvature?
\item \textbf{Q3: }What are the consistency and divergence between effective resistance curvature and Ollivier-Ricci curvature in empirical analysis?
\end{enumerate}

Based on the aforementioned issues, our experiments can be divided into three parts. 
\begin{enumerate}
    \item \textbf{Evaluation of Representation Learning:} We evaluate the integration of both Ollivier-Ricci curvature and effective resistance curvature into graph representation learning. Their effectiveness is assessed in the context of node classification and graph classification tasks, using both real-world and synthetic datasets. 
    \item \textbf{Efficiency Benchmarking:} We systematically evaluated the computational efficiency of both Ollivier-Ricci and effective resistance curvature methods under multiple configurations, utilizing a mix of real-world and synthetic datasets for a comprehensive comparison. 
    \item \textbf{Empirical Comparison:} We investigated the consistency and divergence between effective resistance curvature and Ollivier-Ricci curvature on real-world datasets, analyzing their distributions and characteristic geometric properties. 
\end{enumerate}

We have open-sourced the implementation for computing effective resistance curvature and packaged it on PyPI (install via \textit{pip install ResistanceCurvature}). Source code is available at	
\url{https://github.com/cqfei/resistance\_curvature}. The complete code for this paper is published on GitHub:	
\url{https://github.com/cqfei/Efficient-Curvature-aware-Graph-Network}.

\begin{table*}[!hbpt]
\centering
\caption{Statistics for graph node classification dataset}
\label{tab:node_classification_datasets}
\begin{tabular}{lccccccc}
\hline
\textbf{Dataset} & \textbf{Cora} & \textbf{Citeseer} & \textbf{PubMed} & \textbf{Amazon Computers} & \textbf{Amazon Photo} & \textbf{Coauthor CS} & \textbf{Coauthor Physics} \\
\hline
nodes & 2,708 & 3,327 & 19,717 & 13,752 & 7,650 & 18,333 & 34,493 \\
edges & 5,429 & 4,732 & 44,338 & 245,861 & 119,081 & 81,894 & 247,962 \\
classes & 7 & 6 & 3 & 10 & 8 & 15 & 5 \\
features & 1,433 & 3,703 & 500 & 767 & 745 & 6,805 & 8,415 \\
density & $1.44 \times 10^{-3}$ & $0.82\times 10^{-3}$ & $0.23\times 10^{-3}$ & $2.60\times 10^{-3}$ & $4.07\times 10^{-3}$ & $0.49\times 10^{-3}$ & $0.42\times 10^{-3}$ \\
in/out & 2.79 & 10.11 & 4.05 & 3.48 & 4.78 & 4.21 & 13.49 \\
\hline
\end{tabular}
\end{table*}

\begin{table*}[!hbpt]
    \centering
    \caption{Statistics for graph classification dataset}
    \label{tab:graph_classification_datasets}
    \begin{tabular}{lcccccc}
        \hline
        dataset & Category & Nodes(AVG) & Edges(AVG) & \#graph number & \#classes & mean density\\
        \hline
        ENZYMES & BIOINFORMATICS & 32.6 & 62.1 & 600 & 6 & 0.121\\
        MUTAG & BIOINFORMATICS & 17.9 & 19.8 & 188 & 2 & 0.131\\
        PROTEINS & BIOINFORMATICS & 39.1 & 72.8 & 1113 & 2 & 0.098\\
        D\&D & BIOINFORMATICS & 284.3 & 715.7 & 1178 & 2 & 0.018\\
        IMDBMULTI & SOCIAL NETWORK & 13.0 & 65.9 & 1500 & 3 & 0.845\\
        COLLAB & SOCIAL NETWORK & 74.5 & 2457.8 & 5000 & 3 & 0.898\\
        IMDBBINARY & SOCIAL NETWORK & 19.8 & 96.5 & 1000 & 2 & 0.519\\
        \hline
    \end{tabular}
\end{table*}

\subsection{Experimental Settings}
\subsubsection{Experimental Datasets}

\textbf{Real-world datasets for graph node representation learning.} We utilize 7 real-world benchmark dataset including Cora\footnote{https://github.com/shchur/gnn-benchmark/tree/master/data/planetoid}, Citeseer\footnotemark[1], PubMed\footnotemark[1], Amazon Computers\footnote{https://github.com/shchur/gnn-benchmark/tree/master/data/npz}, Amazon Photo\footnotemark[2], Coauthor CS\footnotemark[2] and Coauthor Physics\footnotemark[2]. The statistical details of these datasets are summarized in  Table~\ref{tab:node_classification_datasets}. 

\textbf{Synthetic datasets for graph node representation learning.}
We conduct synthetic experiments using several well-established graph theoretical models: Watts–Strogatz model~\cite{WattsStrogatz1998}, Newman–Watts model~\cite{NewmanWatts1999} and Kleinberg’s navigable small world graphs~\cite{Kleinberg2000}. The Watts-Strogatz model generates networks by randomly rewiring edges of a ring graph, while the Newman-Watts model adds random edges to the ring structure. Kleinberg's model also introduces random edges, but with probabilities inversely proportional to the shortest-path distances on the ring. Additional details are provided in the
Appendix~\ref{a_graphmodel}.

\textbf{Real-world datasets for graph classification (graph pooling).}
We take graph pooling in graph classification tasks as the experimental subject to validate the effectiveness of graph curvature in such tasks, with experiments conducted on 5 benchmark
datasets including three bioinformatics datasets: ENZYMES~\cite{Schomburg2004}, PROTEINS~\cite{Karsten2005}, D\&D~\cite{D&D2003}, and two social networks: IMDB-B~\cite{Pinar2015a}, IMDB-M~\cite{Pinar2015b}. The statistical properties of these datasets are reported in Table~\ref{tab:graph_classification_datasets}.

\subsubsection{Dataset Division}
\textbf{Real-world datasets for graph node classification}. We follow the dataset splitting strategy of~\cite{Fei&Li2025}. For Cora, Citeseer, and PubMed datasets, we use the public splits provided with the datasets. For Amazon Computers, Amazon Photo, Coauthor CS, and Coauthor Physics datasets, which lack predefined splits, we employ random partitioning: 10\% of nodes for training, 10\% for validation, and the remaining 80\% for testing.

\textbf{Synthetic datasets for graph node classification}. We generate synthetic datasets following~\cite{li2022curvature}. Each dataset contains 1,000 nodes equally divided into 5 classes. We randomly select 20 nodes per class for training, 300 nodes for validation, and the remaining 600 nodes for testing. Each node is assigned a randomly generated 20-dimensional feature vector. Additionally, we generate 10-dimensional uninformative features for each node as model inputs.

\subsubsection{Training Details}
\textbf{Real-world dataset for graph node classification}. We adopt the linear evaluation scheme as in DGI~\cite{Velickovic2019}, where models are first trained in an unsupervised manner. The resulting node embeddings are then used to train and test an L2-regularized logistic regression classifier. We report the average performance over 10 independent runs for each dataset.

\textbf{Real-world dataset for graph pooling}. Following the DGCNN~\cite{ZhangDGCNN2018}, we perform 10-fold cross-validation (9 folds for training, 1 fold for testing) with one training fold reserved for hyperparameter tuning. Experiments are repeated 10 times, and we report the mean classification accuracies.
\subsubsection{Baseline Methods}
\textbf{Baselines for graph node classification.} 
We compare against four state-of-the-art models on both real-world and synthetic datasets: GCN~\cite{GCN2016}, GAT with concatenation~\cite{GAT2017}, CurvGN~\cite{ye2019}, and CGNN~\cite{li2022curvature}.

GCN explicitly utilizes node degrees to compute feature weights, while GAT, CurvGN, and CGNN implicitly learn feature weights through data-driven mechanisms. GAT employs self-attention mechanisms operating on hidden layer features. CurvGN transforms Ricci curvature into multi-channel feature weights via MLP. CGNN leverages graph curvature properties through specialized modules including Negative Curvature Transformation Module and Curvature Normalization Module.

For CurvGN, we use CurvGN-n (the multi-valued variant) as our baseline. For CGNN, we evaluate all six variants: CGNN\_Linear\_$1^{st}$, CGNN\_Linear\_$2^{nd}$, CGNN\_Linear\_Sym, CGNN\_Exp\_$1^{st}$, CGNN\_Exp\_$2^{nd}$, and CGNN\_Exp\_Sym. The naming convention indicates the transformation type (Linear/Exponential) and normalization scheme ($1^{st}$-hop/$2^{nd}$-hop/symmetric).

\textbf{Baselines for graph pooling.} We compare our graph pooling method(CurPool-*) against state-of-the-art GCNN-based pooling approaches, including SUMPool~\cite{selfpool}, SORTPool~\cite{zhang2018graphclassification}, TOPKPool~\cite{gao2019}, SAGPool~\cite{selfpool}, DIFFPool~\cite{diffpool}, StructPool~\cite{structpool}, MinCutPool~\cite{spectralpool}, SEP~\cite{SEP}, HaarPool~\cite{haar}, and iPool~\cite{ipool}.

\subsubsection{Evaluation Metrics}
For graph node classification tasks (both real-world and synthetic datasets), we report the mean and standard deviation of classification accuracy on test nodes. For graph pooling tasks, we follow established practices and use mean classification accuracy as the primary evaluation metric.

\subsubsection{Computing Infrastructure}
All the models are implemented in PyTorch 1.12.0 and Python 3.9. Experiments are conducted on a server equipped with an Intel(R) Xeon(R) Gold 6248R CPU, NVIDIA RTX 3090 GPU (24GB VRAM), and 256GB RAM. The Ollivier-Ricci curvature computations are performed using a Python implementation\footnote{https://github.com/saibalmars/GraphRicciCurvature} from~\cite{ni2019community}. 

\subsection{Experiments on Graph Representation Learning}
\subsubsection{Graph Node Classification on Real-world Datasets.}
We conducted comprehensive experiments across seven real-world datasets, with detailed results presented in Table~\ref{tab:real_node_classification_result}. Seven curvature integration methods were implemented in GNN frameworks: CurvGN, CGNN\_Exp\_$1^{st}$, CGNN\_Exp\_$2^{nd}$, CGNN\_Exp\_Sym,CGNN\_Linear\_$1^{st}$,CGNN\_Linear\_$2^{nd}$,\\
CGNN\_Linear\_Sym. Each method integrated either Ollivier-Ricci curvature (denoted as **-O) or effective resistance curvature (denoted as **-R). Hyperparameter configurations are documented in the 
Appendix~\ref{a_Hyperparameter}.

The results from Table~\ref{tab:real_node_classification_result} reveal two key observations:
\begin{enumerate}
\item Curvature-enhanced methods consistently outperformed non-curvature baselines across all datasets, validating the effectiveness of curvature integration.
\item The performance gap between effective resistance curvature and Ollivier-Ricci curvature across the seven methods ranged from -0.32\% to 0.3\% (mean absolute difference: $0.16\%$), indicating statistically comparable capabilities in characterizing graph structural information.
\end{enumerate}

\begin{table*}
\centering
\caption{Comparison of Node Classification Accuracy (\%) on Real-world Datasets(Mean ±Std)}
\label{tab:real_node_classification_result}
\begin{tabular}{l *{7}{c}}
\hline
\textbf{Model} & \textbf{Cora} & \textbf{Citeseer} & \textbf{PubMed} & \textbf{Amazon Photo} & \textbf{Amazon Computers} & \textbf{Coauthor CS} & \textbf{Coauthor Physics} \\
\hline
GCN  & 81.5$\pm$1.3 & 71.9$\pm$0.9 & 77.8$\pm$2.9 & 91.2$\pm$1.2 &  82.6$\pm$2.4 & 91.1$\pm$0.5 & 92.8$\pm$1.0 \\
GAT  & 81.8$\pm$1.3 & 71.4$\pm$1.9 & 78.7$\pm$2.3 & 85.7$\pm$20.3 & 78.0$\pm$19.0 & 90.5$\pm$0.5 & 92.5$\pm$0.9 \\
\hline
CurvGN-O & 80.62$\pm$1.01 & \textbf{70.83}$\pm$0.62 & 78.22$\pm$0.30 & 91.76$\pm$0.39 & \textbf{84.03}$\pm$0.58 & 92.70$\pm$0.32 & 93.18$\pm$0.44 \\
CurvGN-R & \textbf{81.10}$\pm$1.06 & 70.68$\pm$0.93 & \textbf{79.22}$\pm$0.38 & \textbf{92.25}$\pm$0.51 & 82.99$\pm$0.67 & \textbf{92.76}$\pm$0.38 & \textbf{93.21}$\pm$0.38 \\
\hline
CGNN\_Linear\_$1^{st}$-O & 82.22$\pm$0.52 & 71.19$\pm$0.87 & \textbf{78.02}$\pm$0.41 & \textbf{91.87}$\pm$0.62 & 83.63$\pm$0.58 & \textbf{91.86}$\pm$0.32 & \textbf{93.21}$\pm$0.70 \\
CGNN\_Linear\_$1^{st}$-R & \textbf{82.89}$\pm$0.54 & \textbf{71.55}$\pm$0.67 & 77.91$\pm$0.43 & 91.44$\pm$0.54 & 83.58$\pm$0.56 & 90.95$\pm$0.36 & 92.60$\pm$0.73 \\
\hline
CGNN\_Linear\_$2^{nd}$-O & 81.97$\pm$0.47 & 71.65$\pm$0.61 & 78.20$\pm$0.47 & 91.54$\pm$0.66 & \textbf{84.52}$\pm$0.56 & \textbf{92.99}$\pm$0.35 & \textbf{93.76}$\pm$0.59 \\
CGNN\_Linear\_$2^{nd}$-R & \textbf{82.75}$\pm$0.41 & \textbf{71.53}$\pm$0.55 & \textbf{78.65}$\pm$0.56 & \textbf{91.42}$\pm$0.76 & 83.81$\pm$0.49 & 92.39$\pm$0.54 & 93.30$\pm$0.52 \\
\hline
CGNN\_Linear\_Sym-O & 82.28$\pm$0.45 & 71.25$\pm$0.71 & 78.12$\pm$0.71 & \textbf{91.89}$\pm$0.45 & 84.00$\pm$0.56 & 92.15$\pm$0.50 & \textbf{93.38}$\pm$0.52 \\
CGNN\_Linear\_Sym-R & \textbf{83.10}$\pm$0.53 & \textbf{71.72}$\pm$0.59 & \textbf{78.89}$\pm$0.43 & 91.84$\pm$0.45 & 83.79$\pm$0.51 & 92.09$\pm$0.30 & 92.88$\pm$0.70 \\
\hline
CGNN\_Exp\_$1^{st}$-O & \textbf{83.39}$\pm$0.34 & \textbf{71.17}$\pm$0.72 & \textbf{78.16}$\pm$0.42 & \textbf{91.85}$\pm$0.58 & 83.56$\pm$0.54 & \textbf{91.33}$\pm$0.53 & 92.42$\pm$0.66 \\
CGNN\_Exp\_$1^{st}$-R & 81.75$\pm$0.57 & 71.12$\pm$0.71 & 77.95$\pm$0.53 & 91.64$\pm$0.49 & \textbf{84.01}$\pm$0.50 & 91.04$\pm$0.43 & \textbf{92.43}$\pm$0.59 \\
\hline
CGNN\_Exp\_$2^{nd}$-O & 82.61$\pm$0.49 & \textbf{71.80}$\pm$0.52 & 78.57$\pm$0.56 & 91.59$\pm$0.48 & 84.30$\pm$0.48 & \textbf{92.66}$\pm$0.45 & 93.24$\pm$0.55 \\
CGNN\_Exp\_$2^{nd}$-R & \textbf{83.04$\pm$0.50} & 71.58$\pm$0.70 & \textbf{78.62}$\pm$0.37 & \textbf{91.87}$\pm$0.62 & 84.10$\pm$0.56 & 92.58$\pm$0.46 & \textbf{93.35}$\pm$0.59 \\
\hline
CGNN\_Exp\_Sym-O & \textbf{83.08}$\pm$0.51 & 71.19$\pm$0.63 & \textbf{78.92}$\pm$0.51 & \textbf{91.84}$\pm$0.46 & 83.81$\pm$0.60 & \textbf{92.41}$\pm$0.51 & \textbf{93.46}$\pm$0.67 \\
CGNN\_Exp\_Sym-R & 82.10$\pm$0.51 & \textbf{71.31}$\pm$0.58 & 78.90$\pm$0.35 & 91.87$\pm$0.47 & \textbf{84.17}$\pm$0.52 & 92.14$\pm$0.40 & 92.97$\pm$0.62 \\
\hline
AVG-O & 82.30 & 71.26 & 78.29 & \textbf{91.76} & \textbf{83.93} & \textbf{92.14} & \textbf{93.25} \\
AVG-R & \textbf{82.30} & \textbf{71.39} & \textbf{78.59} & 91.67 & 83.75 & 91.92 & 93.04 \\
\hline
\end{tabular}
\end{table*}

\subsubsection{Graph Node Classification on Synthetic Datasets}
Experiments were conducted on three synthetic graph models under high-density (**-H) and low-density (**-L) configurations (to assess the impact of topological density). Each trial was repeated 100 times for statistical robustness (hyperparameters in the 
Appendix~\ref{a_Hyperparameter}).

We conclude 3 key findings from Table~\ref{tab:node_classification_generated_results}. 
\begin{enumerate}
\item Curvature integration significantly improved performance over curvature-free baselines. This validates curvature integration as a critical enhancement for graph structural learning. 

\item Effective resistance curvature achieved performance parity with Ollivier-Ricci curvature (difference: -0.55\% to 0.13\%). This establishes effective resistance curvature as a viable alternative for graph classification tasks, particularly in resource-constrained scenarios. 
\item Graph density is a critical topological factor for GNN learning. Empirical results on diverse synthetic graphs indicate that higher graph density generally enhances GNN performance by optimizing the efficiency of the message-passing mechanism. Moreover, curvature-enhanced GNNs can further strengthen the model's capacity to represent the underlying graph geometry across varying density conditions.

\end{enumerate}

\begin{table*}[htbp]
\centering
\caption{Node Classification Accuracy (\%) on Synthetic Graphs}
\label{tab:node_classification_generated_results}
\begin{tabular}{lcccccc}
\hline
\textbf{Model} & \textbf{WS-H} & \textbf{WS-L} & \textbf{NW-H} & \textbf{NW-L} & \textbf{KL-H} & \textbf{KL-L} \\
\hline
GCN            & 41.485 ± 1.97 & 25.590 ± 1.51 & 42.250 ± 1.29 & 26.095 ± 0.91 & 56.035 ± 2.10 & 37.620 ± 2.24 \\
GAT            & 42.255 ± 2.69 & 25.045 ± 1.17 & 40.810 ± 1.81 & 27.580 ± 1.43 & 49.445 ± 2.63 & 35.465 ± 1.93 \\
\hline
CGNN-line-sy-O & \textbf{44.340 ± 1.86} & 24.675 ± 1.50 & \textbf{50.810 ± 1.26} & \textbf{33.560 ± 1.10} & 62.880 ± 2.69 & 36.990 ± 1.17 \\
CGNN-line-sy-R & 43.935 ± 2.10 & \textbf{25.350 ± 1.13} & 50.660 ± 1.35 & 32.885 ± 1.83 & \textbf{64.170 ± 2.02} & \textbf{38.330 ± 1.49} \\
\hline
CGNN-line-1-O  & \textbf{44.505 ± 1.64} & 24.655 ± 1.30 & 50.905 ± 0.93 & 32.840 ± 0.97 & 62.590 ± 1.84 & 37.810 ± 1.46 \\
CGNN-line-1-R  & 44.320 ± 1.38 & \textbf{24.780 ± 1.23} & \textbf{51.865 ± 1.42} & \textbf{33.125 ± 0.97} & \textbf{65.120 ± 2.30} & \textbf{38.765 ± 1.41} \\
\hline
CGNN-line-2-O  & 43.835 ± 1.09 & 24.595 ± 1.25 & 50.625 ± 1.57 & 33.695 ± 1.23 & 62.455 ± 1.98 & 37.690 ± 1.00 \\
CGNN-line-2-R  & \textbf{44.240 ± 1.61} & \textbf{25.200 ± 1.37} & \textbf{49.790 ± 1.80} & \textbf{33.850 ± 1.18} & \textbf{62.880 ± 2.25} & \textbf{38.045 ± 1.13} \\
\hline
CGNN-exp-sy-O  & 44.570 ± 1.19 & 24.865 ± 1.54 & 50.795 ± 1.49 & \textbf{34.130 ± 0.69} & 63.675 ± 2.29 & 37.665 ± 1.47 \\
CGNN-exp-sy-R  & \textbf{45.295 ± 1.54} & \textbf{25.035 ± 1.39} & \textbf{50.450 ± 1.28} & 33.530 ± 1.44 & \textbf{63.500 ± 2.06} & \textbf{38.835 ± 1.44} \\
\hline
CGNN-exp-1-O   & \textbf{44.640 ± 1.87} & 24.890 ± 1.29 & 51.005 ± 1.27 & 33.325 ± 1.05 & 63.000 ± 1.91 & 38.265 ± 1.21 \\
CGNN-exp-1-R   & 44.050 ± 1.18 & \textbf{25.365 ± 1.40} & \textbf{51.290 ± 1.40} & \textbf{32.990 ± 0.90} & \textbf{64.210 ± 1.54} & \textbf{38.630 ± 2.31} \\
\hline
CGNN-exp-2-O   & \textbf{44.310 ± 1.16} & \textbf{25.300 ± 1.32} & 50.310 ± 1.61 & 33.505 ± 1.66 & 63.975 ± 1.83 & 38.700 ± 1.18 \\
CGNN-exp-2-R   & 43.975 ± 2.12 & 24.345 ± 1.11 & \textbf{50.920 ± 1.11} & \textbf{34.215 ± 0.99} & \textbf{64.375 ± 1.80} & \textbf{39.125 ± 1.08} \\
\hline
CGN-N-O        & \textbf{46.020 ± 1.27} & \textbf{27.470 ± 1.28} & \textbf{48.790 ± 1.44} & \textbf{27.565 ± 1.06} & \textbf{67.845 ± 1.79} & 39.690 ± 1.68 \\
CGN-N-R        & 45.980 ± 2.15 & 26.875 ± 1.46 & 47.520 ± 1.25 & 26.605 ± 0.89 & 67.520 ± 1.47 & \textbf{39.945 ± 1.54} \\
\hline
Avg-O & \textbf{44.603} & 25.064& \textbf{50.463}& \textbf{32.660}& 63.774& 38.116\\
Avg-R & 44.542 & \textbf{25.279}& 50.356& 32.457& \textbf{64.539} & \textbf{38.811} \\
\hline
\end{tabular}
\end{table*}

\begin{figure*}[!htbp]
	\centering 
	\includegraphics[width=1.0\textwidth]{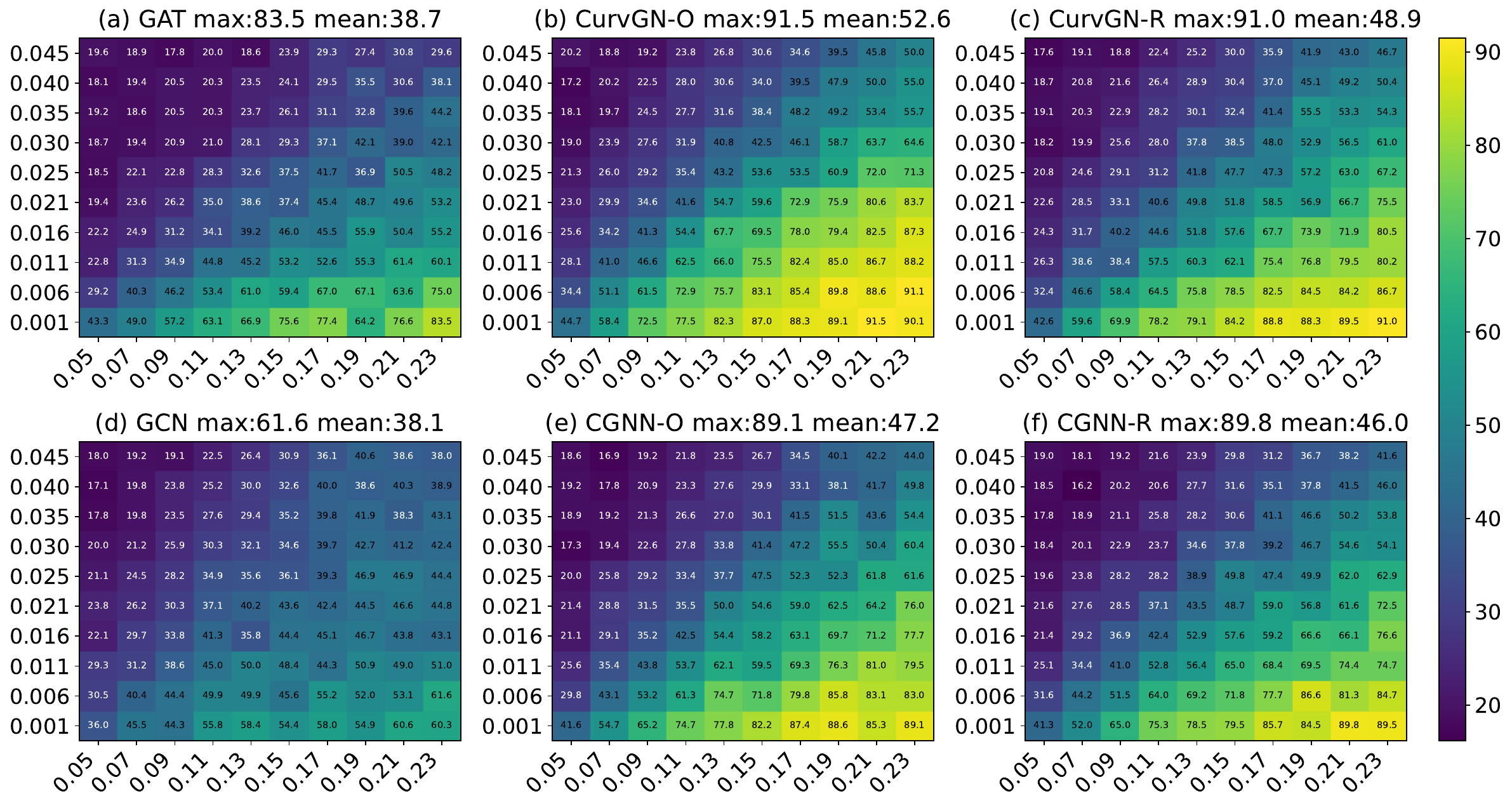} 
    \caption{Classification accuracy heatmaps on SBM-generated graphs. (a,d): GAT/GCN baselines; (b,c,e,f): CurvGN/CGNN with Ollivier-Ricci (-O) and resistance (-R) curvature. (Axes: parameter p vs. parameter q)}
	\label{fig:sbm_heatmap}
\end{figure*}

Furthermore, to evaluate how community structure influences node feature aggregation, we examined the classification accuracy of different curvature integration methods on 100 graphs generated using the Stochastic Block Model (SBM), as visualized in Fig.~\ref{fig:sbm_heatmap}. In the SBM, parameter $p$ controls the intra-community connection probability, while $q$ controls the inter-community connection probability. By systematically varying pand q, we analyze the effect of integrating different curvature types. Following~\cite{ye2019}, we set $p \in \{0.05, 0.07, \dots, 0.23\}$ and $q \in \{0.001, 0.005, \dots, 0.045\}$. Each graph configuration was evaluated over 10 independent runs, and results were averaged. Models were trained for 200 epochs with early stopping based on a validation set.

The heatmaps reveal several key findings:
\begin{itemize}
    \item Models incorporating curvature consistently outperform those without curvature.
    \item Classification accuracy generally increases from the top-left (low p, high q) to the bottom-right (high p, low q) across all heatmaps, indicating that GNNs perform better when community structure is strong—i.e., high intra-community and low inter-community connectivity. Curvature integration further amplifies this tendency.
    \item Under the CurvGN framework, Ollivier-Ricci curvature yields better performance than effective resistance curvature, whereas under the CGNN framework, the two perform comparably. This implies that only with an appropriate curvature integration method can the full potential of curvature be realized, and that current techniques—largely designed for Ollivier-Ricci curvature—may not fully harness the potential of effective resistance curvature. Future work could explore custom integration schemes that better align with the geometric properties of effective resistance curvature.
\end{itemize}

\subsubsection{Experiments for Graph Pooling}
\begin{table*}[h]
    \centering
    \caption{Graph Classification Accuracy (\%) on Benchmark Datasets}
    \label{tab:graph_pool_results}
    \begin{tabular}{lccccc}
        \hline
        dataset & ENZYMES & D\&D & PROTEINS  &  IMDB-B & IMDB-M   \\
        \hline
		SumPool    & 47.33& 78.72 & 76.26 & 51.69 & 42.76  \\
		SortPool  & 52.83& 80.60 & 76.83 & 70.00 & 46.26\\
		TopkPool    & 53.67& 81.71 &  77.47& 72.80 & 49.00 \\
        SagPool  & \textbf{64.17}& 81.03  & 78.43 & 73.40 & 51.13 \\
		DiffPool   & 60.33& 80.94& 77.74& 72.40& 50.13 \\
		StructPool  & 62.69& 82.22& 78.74& 73.50 &51.20 \\
		MinCutPool & 52.00* & 81.37  &76.52 &72.67 & 50.83\\
        SEP & - & 77.98  &76.42 &74.12 & 51.53\\
        iPool & 59.50 & 79.45  & 77.63 &- & -\\
        HaarPool & 41.67 & 81.20  & 80.40 &- & -\\
        \hline
        CurvPool-O & 57.00 & 82.90 & \textbf{79.63}   & 77.66 & \textbf{53.33}   \\
        CurvPool-R & \textbf{58.00} & \textbf{82.90} & 79.60    & \textbf{78.00} & 52.7   \\
        \hline
    \end{tabular}
\end{table*}

To validate the effectiveness of graph curvature in graph classification, we focus on graph pooling—a representative task in this domain—and propose CurvPool, a method that integrates curvature information into the pooling process. CurvPool adjusts node connectivity in the original graph by modifying edge weights based on their curvature values: edges with positive curvature have their weights increased to pull nodes closer, while those with negative curvature have their weights decreased to push nodes apart. The initial graph is assumed to have uniform edge weights of 1. Implementation details are provided in the 
Appendix~\ref{a_Hyperparameter}.

We evaluate two variants of our method—CurvPool-O (with Ollivier-Ricci curvature) and CurvPool-R (with effective resistance curvature)—on five benchmark datasets. Experimental settings are summarized in the 
Appendix~\ref{a_Hyperparameter}, and results are presented in Table~\ref{tab:graph_pool_results}.

The results lead to the following conclusions:
\begin{itemize}
    \item CurvPool consistently outperforms baseline methods across nearly all datasets, demonstrating the general utility of curvature-aware pooling.
    \item The performance of CurvPool-O and CurvPool-R is comparable, reaffirming that effective resistance curvature is a competitive and efficient alternative to Ollivier-Ricci curvature in graph-structured learning tasks.
\end{itemize}

\subsection{Efficiency Comparison Experiments}

\begin{table*}[!htbp]
\centering
\caption{Curvature computation time cost (seconds) comparison in real datasets}
\label{tab:real_cost_results}
\begin{tabular}{lccccccc}
\hline
Device & Cora & Citeseer & PubMed &  Amazon Computers & Amazon Photo  & Coauthor CS & Coauthor Physics \\
\hline
-O(CPU 1) & 3.564 & 3.784 & 70.666  & 896.576 & 190.561 & 78.949 & 370.953 \\
-O(CPU 2) & 4.881 & 5.301 & 66.392  & 754.096 & 169.635 & 103.256 & 333.013 \\
\hline
-R L20 & 0.037 & 0.035 & 1.635  & 0.683 & 0.188 & 1.080 & 5.926 \\
-R 3090ti & 0.052 & 0.045 & 1.753  & 0.725 & 0.198 & 1.444 & - \\
\hline
Maximum Speedup Ratio & 132x & 151x & 43x  & 1313x & 1032x & 93x & 63x \\
\hline
\end{tabular}
\\
\footnotesize \textit{Note:} Dash (-) indicates exceeding the computational capabilities.
\end{table*}

To assess the computational efficiency of Ollivier-Ricci curvature and effective resistance curvature, we conducted runtime comparisons using both real-world and synthetic datasets from the node classification experiments.

In the real-world dataset experiments, we measured the computation time of both curvature methods on seven real-world datasets under two CPU configurations (CPU1: Intel Xeon Gold 6248R, 96 threads; CPU2: Intel Xeon E5-2603 v4, 24 threads) and two GPU environments (GPU1: NVIDIA RTX 3090 Ti; GPU2: NVIDIA L20). All CPU trials used full-thread parallelization, while GPU tests used a single device. Each dataset was run 10 times, and the average runtime was recorded. Results are summarized in Table~\ref{tab:real_cost_results}.

As shown in the table, effective resistance curvature is significantly faster than Ollivier-Ricci curvature across all datasets, with a maximum speedup of 1313× observed on the Amazon Computers dataset (CPU2: 896.576 s vs. GPU2: 0.683 s).

Notably, the time complexity of Ollivier-Ricci curvature depends mainly on the number of edges, whereas effective resistance curvature scales with the number of nodes. This distinction is illustrated by the PubMed and Amazon Photo datasets: although PubMed has more nodes (19,717 vs. 7,650), Amazon Photo has more edges, leading to longer Ollivier-Ricci curvature computation times (169.635 s vs. 66.392 s on CPU2). In contrast, effective resistance curvature is faster on Amazon Photo due to its smaller node set (0.198 s vs. 1.753 s on GPU2).

To further examine scalability, we generated graphs using the Newman–Watts (NW) and random regular (RR) models with node counts ranging from 1,000 to 10,000 and node degrees set to \{10, 30, 50\}. Each configuration was executed 10 times on CPU1 and GPU1, with average runtimes reported in Fig.~\ref{fig:generated_cost_tendency}. The results confirm that Ollivier-Ricci curvature requires substantially more time than effective resistance curvature as graph size increases, clearly demonstrating the computational advantage of the latter.

\begin{figure}[!htbp]
	\centering 
	\includegraphics[width=0.5\textwidth]{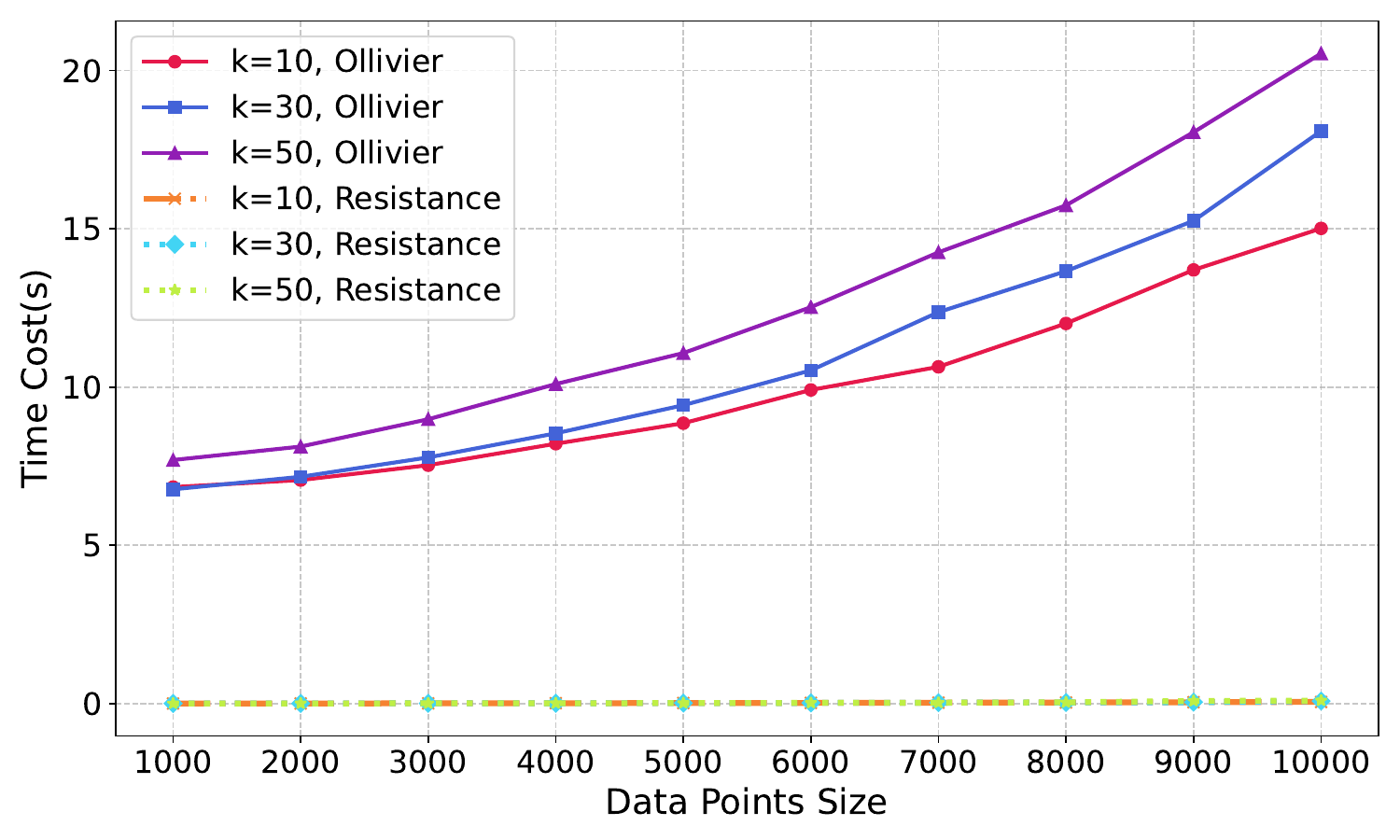} 
    \includegraphics[width=0.5\textwidth]{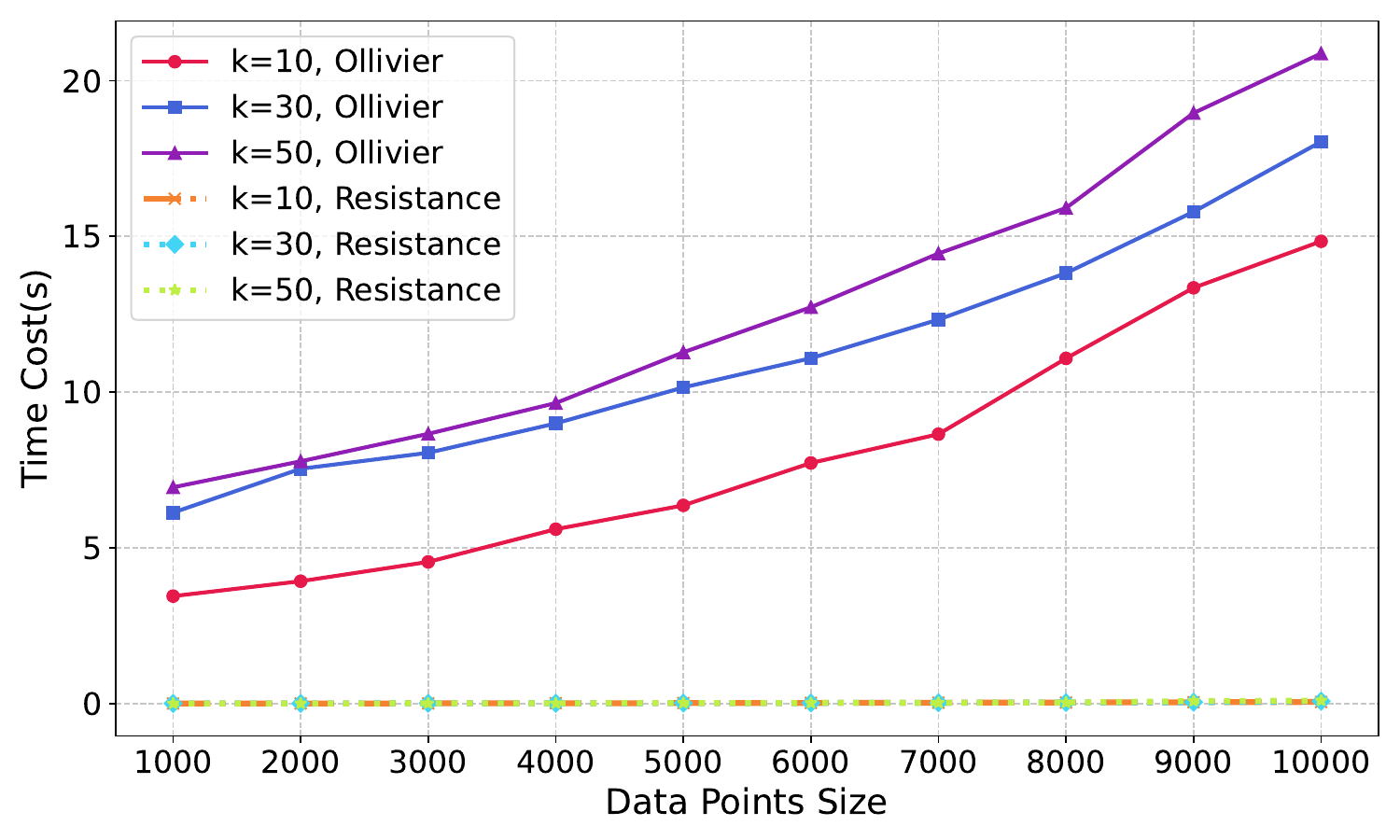} 
	\caption{Time cost comparison for Ollivier-Ricci and effective resistance curvature on NW(upper) and RR (lower)) graph}.
	\label{fig:generated_cost_tendency}
\end{figure}

\subsection{An Empirical Comparative Analysis of Effective Resistance Curvature and Ollivier-Ricci Curvature: Consistency and Divergence}

Building upon the theoretical analysis in section~\ref{relation_between_O_R}, this section further investigates the consistency and differences between effective resistance curvature and Ollivier-Ricci curvature through a series of experiments on real-world datasets. The experiments are conducted primarily along two dimensions: (1) the distribution characteristics of the two curvatures in real datasets; (2) their correlation patterns with graph geometric properties.

\subsubsection{Comparison of Curvature Distribution Characteristics}
We analyzed the distributions of the two curvatures on three citation network datasets: Cora, CiteSeer, and PubMed. As shown in Fig.~\ref{fig:curvature_dist_KDE}, the results reveal the following characteristics:

\begin{figure*}[htbp]
	\centering 
    \includegraphics[width=0.32\textwidth]{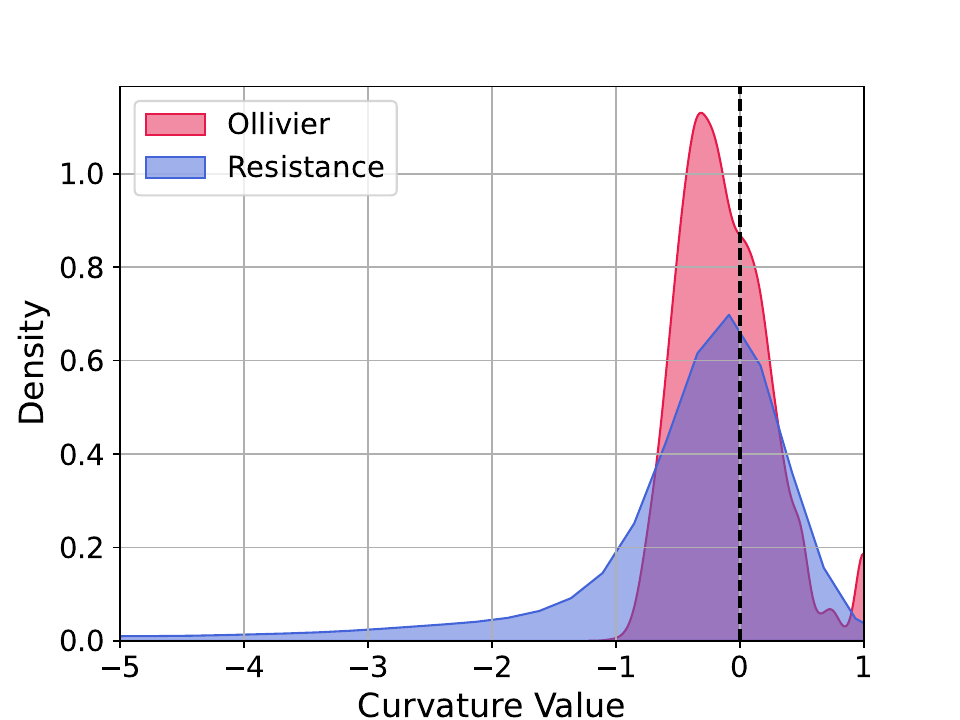} 
	\includegraphics[width=0.32\textwidth]{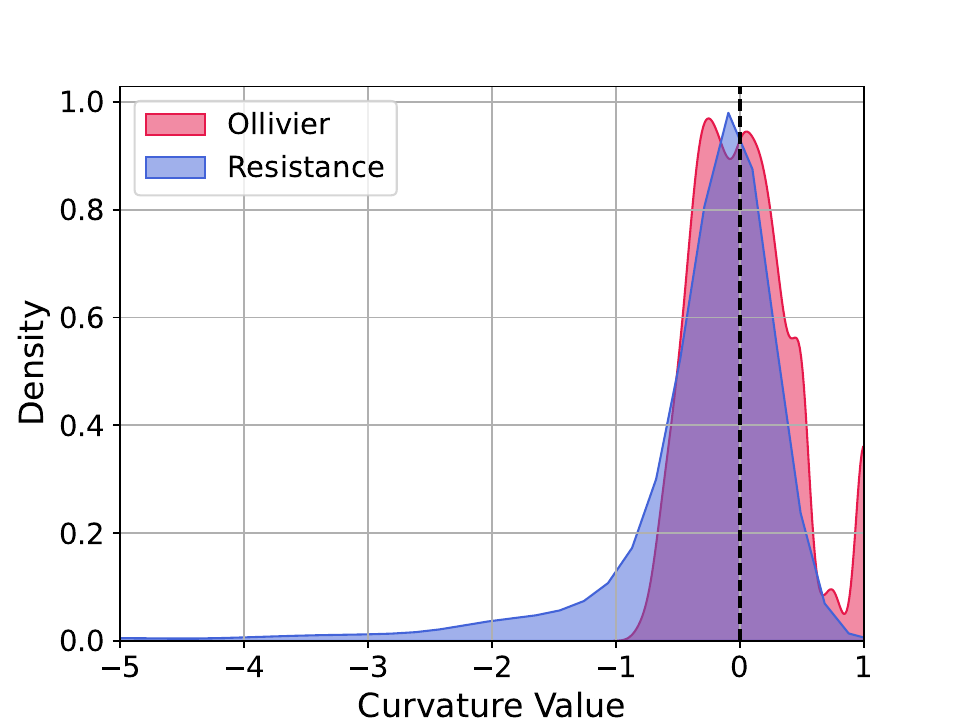} 
    \includegraphics[width=0.32\textwidth]{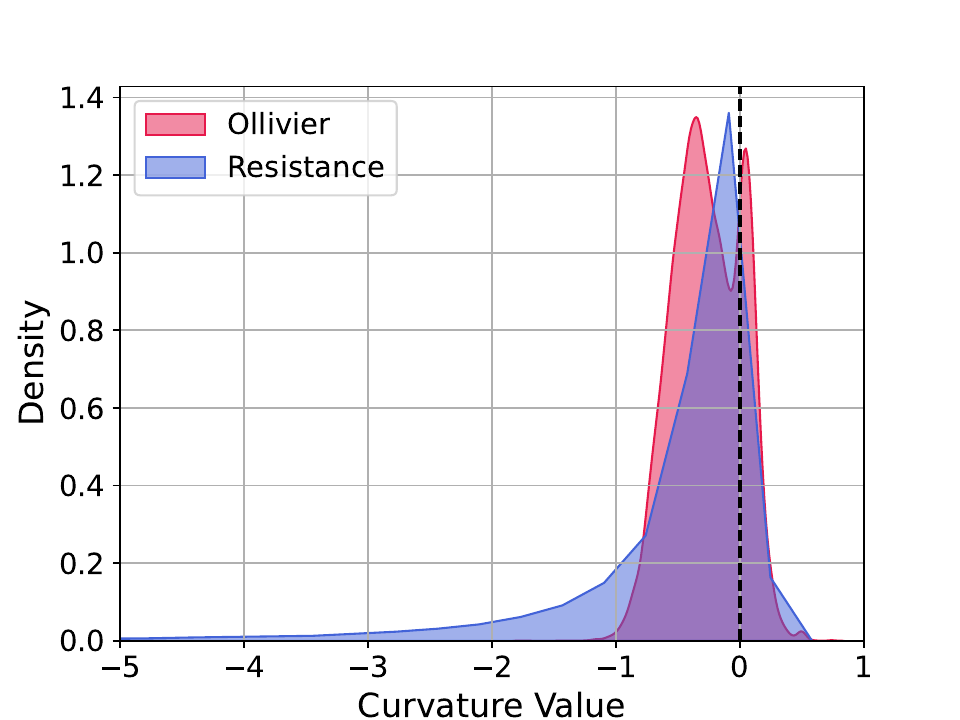} 
	\caption{Curvature distribution KDE on three real-world dataset(left: Cora, center: Citeseer, right: PubMed)}.
	\label{fig:curvature_dist_KDE}
\end{figure*}

\begin{figure*}[htbp]
	\centering 
    \includegraphics[width=0.32\textwidth]{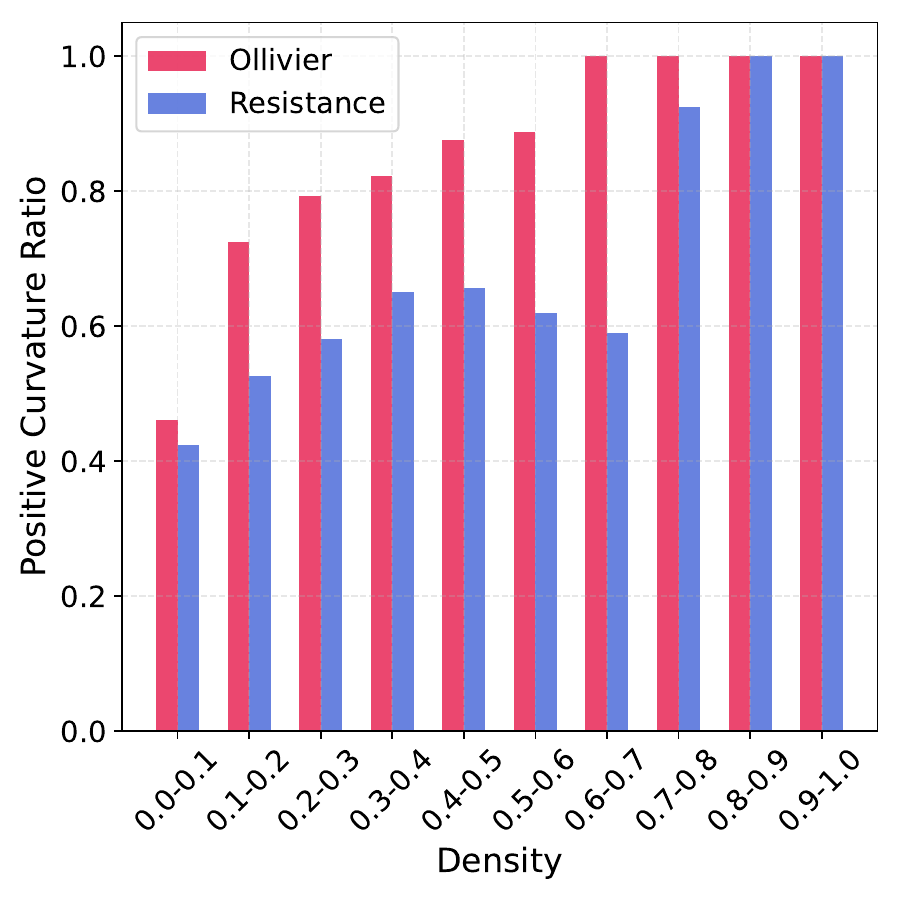} 
	\includegraphics[width=0.32\textwidth]{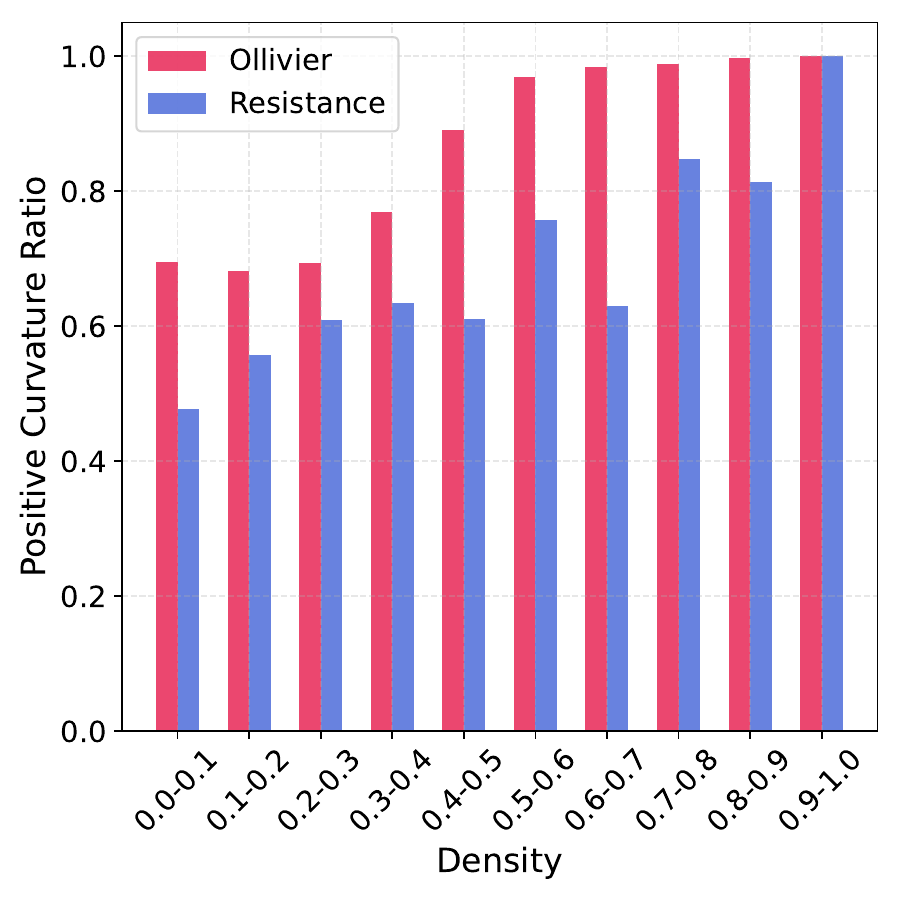} 
    \includegraphics[width=0.32\textwidth]{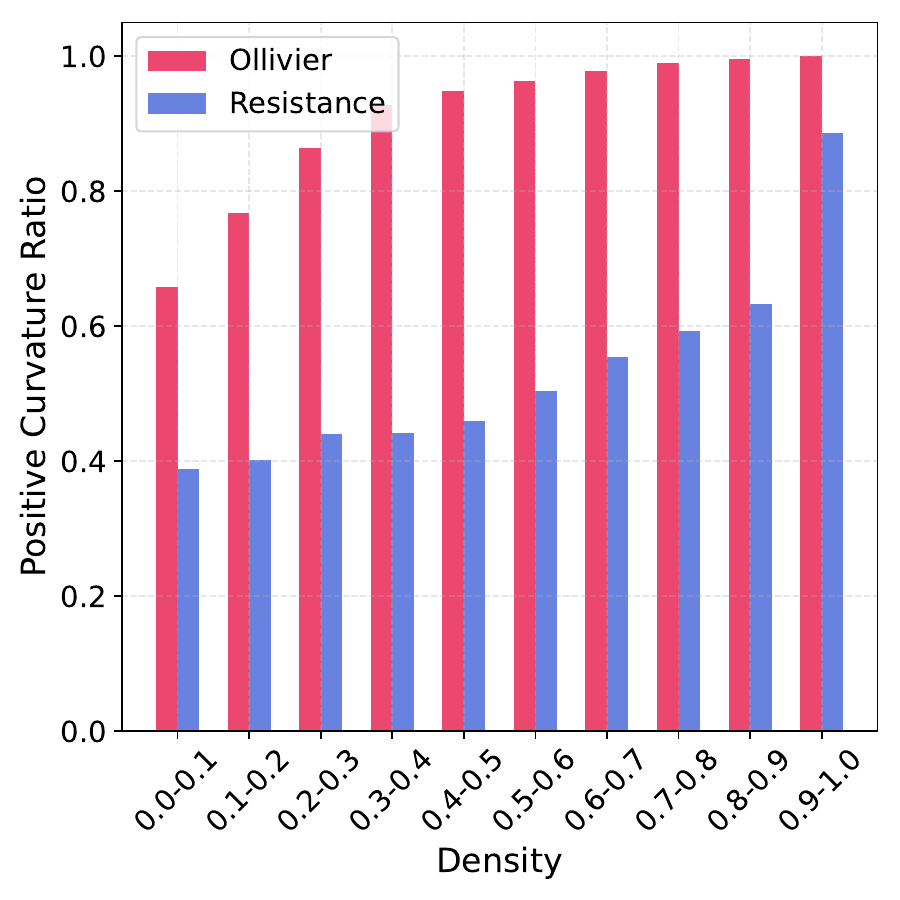} 
	\caption{Proportion Distribution of Positively-Curved Edges on Three Real Datasets (left: PROTEINS, center: IMDB-B, right: COLLAB)}.
	\label{fig:postive_ratio_density}
\end{figure*}

\textbf{Consistency in distribution center}: Across all datasets, both curvature values are densely distributed near 0. This indicates that the overall geometric structure of these citation networks is close to Euclidean space, with curvature values for most edges or node pairs approaching zero, reflecting an overall flat characteristic of the networks.

\textbf{Divergence in distribution shape}: The Ollivier-Ricci curvature distribution exhibits an unstable shape, being right-skewed (Cora, CiteSeer) or left-skewed (PubMed). In contrast, the effective resistance curvature maintains a stable left-skewed distribution across all datasets, demonstrating stronger robustness.

\textbf{Difference in capturing anomalous structures}: The tails of the Ollivier-Ricci curvature distribution decay rapidly, resulting in a sharp and narrow shape that identifies fewer significant positive/negative curvature anomalies. Conversely, effective resistance curvature exhibits a distinct "heavy-tail" characteristic, with a flatter and broader distribution. This suggests its enhanced capability to capture local structures with significant curvature values and greater sensitivity to subtle geometric differences.

\subsubsection{Correlation Analysis with Graph Geometric Properties}
To further explore the relationship between the two curvatures and graph geometric properties, experiments were conducted on five datasets: ENZYMES and PROTEINS (sparse graphs), as well as IMDB-B, IMDB-M, and COLLAB (dense graphs). The analysis focused on two key metrics: Graph Density and Betweenness Centrality. Betweenness Centrality measures the frequency with which a node lies on the shortest paths between other node pairs and is a core metric for identifying hub nodes in a network.

The experiments revealed the following patterns:

\textbf{Consistent relationship with community size}: As shown in Fig.~\ref{fig:fully-connected-curvature}, both curvatures decrease monotonically with increasing community size (Pattern 1) and eventually converge. Validation experiments on complete graphs show that the upper bounds for both effective resistance curvature and Ollivier-Ricci curvature are 1, while their lower bounds are 0 and 0.5, respectively.

\begin{figure}[htbp]
	\centering 
	\includegraphics[width=0.5\textwidth]{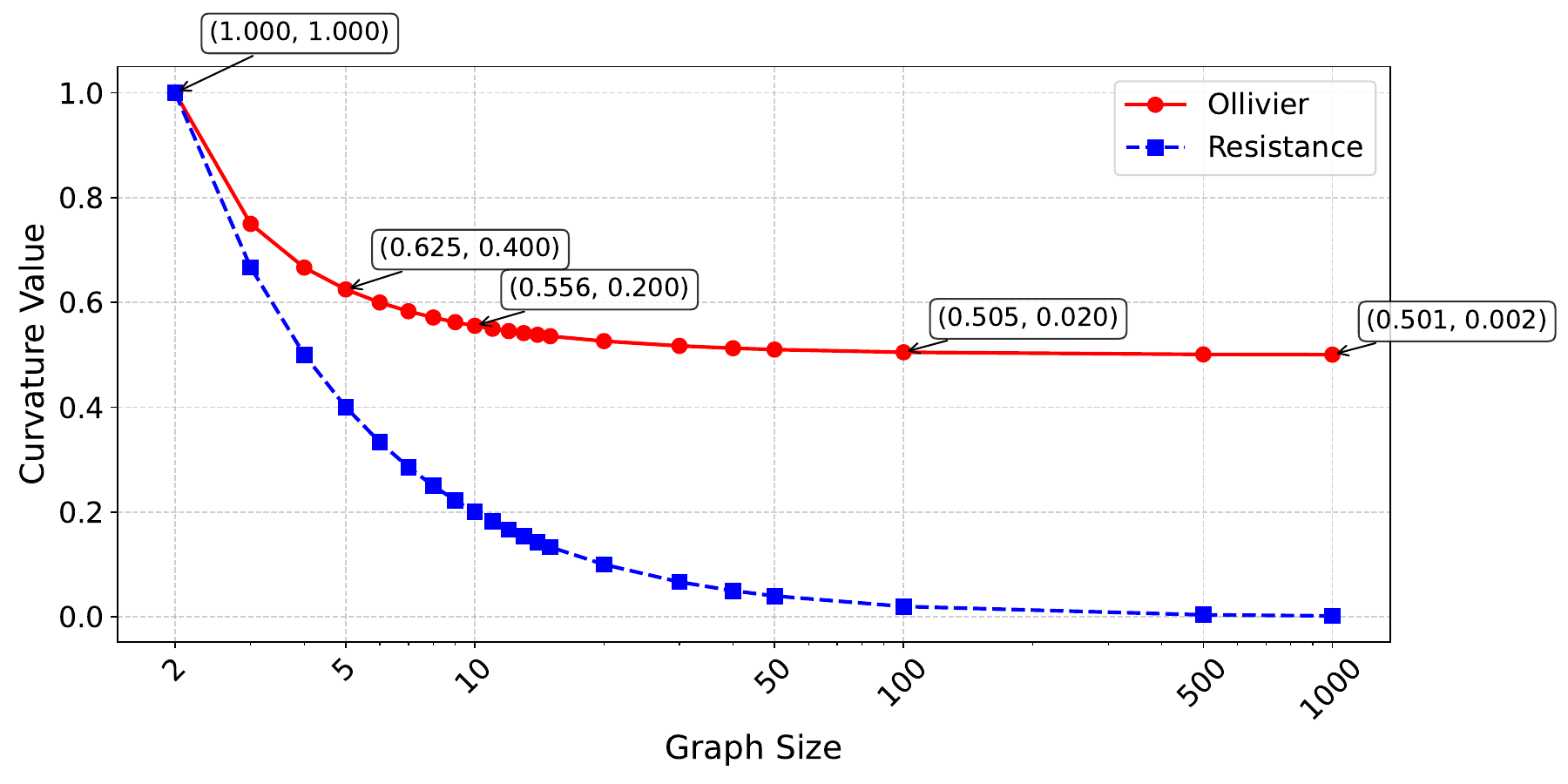} 
	\caption{Ollivier-Ricci and effective resistance curvature value in different graph size on fully-connected graph.}
	\label{fig:fully-connected-curvature}
\end{figure}

\textbf{Consistent relationship with graph density}: As shown in Fig.~\ref{fig:postive_ratio_density}, the proportion of edges with positive curvature for both measures increases significantly with rising graph density, indicating that positive curvature relationships dominate in high-density graphs.

\textbf{Divergent association with high-betweenness-centrality nodes}: For effective resistance curvature, the larger the size of the clique connected to a high-betweenness-centrality node, the smaller the curvature value of the edges between that node and the nodes within the clique (Pattern 2-1). For Ollivier-Ricci curvature, however, the larger the clique size connected to a high-betweenness-centrality node, the largerthe curvature value of the corresponding edges (Pattern 2-2).

To validate these patterns, we employed the Girvan-Newman algorithm for community detection (which iteratively removes edges with the highest edge betweenness to reveal community structure) and statistically analyzed graph structures containing only a single high-betweenness-centrality node (to exclude interference from multiple central nodes). Statistical results across the five datasets (see Table~\ref{tab:pattern212}) confirm that Patterns 2-1 and 2-2 hold in the vast majority of cases.

\subsubsection{Further Interpretation of Key Divergence}
Both effective resistance curvature and Ollivier-Ricci curvature can effectively identify high-betweenness-centrality nodes. However, effective resistance curvature demonstrates a wider identification range; even in ultra-dense graphs, it can distinguish "bottleneck" nodes based on the sign of the curvature value. For instance, in the example shown in the 
Fig.~\ref{fig:examp_imdb_b_42}. 

effective resistance curvature assigns negative curvature to the connections within the set of nodes $C_4$, which has the highest betweenness centrality, whereas Ollivier-Ricci curvature assigns them positive values, highlighting their fundamental difference.

Interpreting the sign of curvature as indicative of "repulsive" (negative curvature) or "attractive" (positive curvature) relationships between nodes, effective resistance curvature tends to characterize relationships among high-betweenness-centrality nodes as competitive or repulsive. In contrast, Ollivier-Ricci curvature tends to portray them as attractive or complementary. This fundamental difference makes them suitable for different scenarios: effective resistance curvature excels at detecting fragile connections or bottleneck structures within networks, while Ollivier-Ricci curvature is more adept at revealing cooperative relationships within functional modules (see the 
Appendix~\ref{Divergence_in_Application} for examples).

\setlength{\tabcolsep}{4pt}
\begin{table}[!htbp]
\centering
\caption{The fulfillment status of Pattern 2-1 and Pattern 2-2 across five real-world datasets.}
\label{tab:pattern212}
\begin{tabular}{lccccc}
\hline
Dataset & ENZYMES & PROTEINS & IMDB-B & IMDB-M &COLLAB \\
\hline
Pattern 2-1 & 100 & 96.1 & 100  & 100 & 100 \\
Pattern 2-2 & 100 & 96.1 & 94.54  & 96.83 & 83.75  \\
\hline
\end{tabular}
\end{table}

\section{Conclusion}\label{sec:conc}

Mining nonlinear curvature information in graph structures is essential for advancing graph learning beyond conventional topological analysis and achieving more accurate geometric characterization. While Ollivier-Ricci curvature offers a solid theoretical foundation for capturing graph geometry, its practical application faces significant challenges due to prohibitive computational complexity and pronounced sensitivity to hyperparameter selection. To overcome these limitations, we introduce effective resistance curvature, a novel metric grounded in circuit network theory that conceptualizes graphs as resistor networks. This approach naturally captures connectivity patterns and bottleneck structures while maintaining compelling geometric expressiveness. More importantly, it offers superior computational efficiency and enhanced robustness to structural perturbations compared to optimal transport-based methods. Extensive experiments across diverse graph learning tasks demonstrate that our method achieves performance comparable to Ollivier-Ricci curvature while delivering substantially improved scalability and numerical stability. These advantages position effective resistance curvature as a practical and scalable tool for curvature-based analysis on large-scale real-world graphs.

\section{Appendix}
\subsection{Ollivier-Ricci Curvature}
Ollivier-Ricci curvature is a metric used in graph theory and metric spaces to describe the geometric properties of a space. It quantifies the "curvature" between two points using the Wasserstein distance, offering a way to measure how the distance between points changes in a graph or network. This concept was first introduced by Ollivier in 2009, aiming to provide a curvature measure for networks and discrete graphs that is analogous to Ricci curvature in Riemannian geometry.

For a weighted graph $G=(V,E)$ with a metric $d$ defined on node pairs, where $V$ is the set of nodes and $E$ is the set of edges, Ollivier-Ricci curvature provides a curvature measure between two nodes. Let $x, y\in V$ be any two nodes in a graph, and let $u$ and $v$ represent two probability distributions starting from $x$ and $y$, respectively. The definition of Ollivier-Ricci curvature is based on the variation of the Wasserstein distance between these two probability distributions, expressed as:
\begin{equation}
    k(x,y)= 1-\frac{W_1(u,v)}{d(x,y)},
\end{equation}
where $W_1(u,v)$ denotes the Wasserstein-1 distance between $u$ and $v$, and $d(x,y)$ is the graph distance between nodes $x$ and $y$. This curvature quantifies how the probability mass spreads between the two distributions relative to their graph distance.

\subsection{Supplementary Experimental Settings}
\label{a_Hyperparameter}

\textbf{Hyperparameter settings for graph node classification on real-world datasets.} 
 For the real world dataset, we follow CGNN [14] which adjusting the hidden layer dimension of CGNNs to 64. And the architectures of baselines are consistent with their corresponding papers. For both CGNN and CurvGN, we use Adam SGD optimizer with a learning rate of 0.005 and L2 regularization of 0.0005, and cross-entropy as the loss function to train the models. In this paper, the weight matrix is initialized with Glorot initialization. 

\textbf{Hyperparameter settings for graph node classification on Synthetic datasets. }
For the synthetic datasets, we follow CurvGN [14] which all three models, Watts-Strogatz, Newman-Watts and Kleinberg’s model, are created by randomly modifying/adding edges to a ring graph. A ring graph has n nodes embedded on a circle, with each node connected to its k nearest neighbors. 

For both CGNN and CurvGN, we use Adam SGD optimizer with all learning rates and regularization rates are set to 0.01 and 0.0005. For dense graphs, the number of neighbors is designed to be 20, and for sparse graphs, the number of neighbors is 5. The graph has 1000 nodes. For GAT, we set the dimension of the hidden layer in models to 8 and set the head of GAT to 1.

\textbf{Hyperparameter settings for graph pooling on real-world datasets. }
The original MinCut paper did not report results on the ENZYMES dataset. As an important baseline, we performed a grid search to re-tune the learning rate and regularization rate, obtaining the optimal performance of MinCut on the ENZYMES dataset. Using these hyperparameters, we further derived the results for CurvPool-O and CurvPool-R.

We adpot formula~\ref{eq:graph_pool} to adjust orginal weights in a graph to a new weight by suitable curvature intervention. hyperparameter $\eta$ is used to control the degree of curvature intervention. All the best $\eta$ values acheiving best performance for all datasets are shown in Table~\ref{tab:graph_pool_setting1}.
\begin{equation}
\label{eq:graph_pool}
w_{ij}=w_{ij}(1-\eta*k_{ij})
\end{equation}

\begin{table*}[!htbp]
    \centering
    \caption{Graph curvature learning rate used in various datasets in graph pooling experiments}
    \label{tab:graph_pool_setting1}
    \begin{tabular}{lccccccc}
        \hline
        Dataset & IMDBMULTI & COLLAB & PROTEINS & DD & ENZYMES & MUTAG & IMDBBINARY \\
        \hline
        CurvPool-O & 0.3 & 0.4 & 0.3 & 0.6 & 0.1 & 0.9 & 0.2 \\
        CurvPool-R & 0.5 & 0.6 & 0.8 & 0.9 & 1.0 & 0.2 & 0.9 \\
        \hline
    \end{tabular}
\end{table*}

\subsection{Details of Graph Model}
\label{a_graphmodel}
\begin{itemize}
    \item \textbf{Watts–Strogatz}: The Watts-Strogatz (WS) model is a random graph model used to generate networks with the small-world property. It begins with a ring-structured regular graph in which each node is connected to its K nearest neighbors; each edge is then randomly rewired with probability p. This model is widely used for modeling real-world systems such as social networks, neural networks, infectious disease transmission, and power grids.
    \item \textbf{Newman–Watts}: The Newman-Watts (NW) model is a variant of the Watts-Strogatz model. Unlike the WS model, which modifies existing edges through “rewiring,” the NW model directly introduces random shortcuts while preserving the original regular connections, thereby avoiding issues such as network fragmentation or isolated nodes that may arise from the rewiring process.
    \item \textbf{Kleinberg’s} navigable small world graph: The Kleinberg model extends the WS model by incorporating long-range connections in addition to local links. These long-range connections are not entirely random but are generated based on a power-law distribution related to grid distance ($P(\text{link}) \propto d^{-\alpha}$), providing a theoretical foundation for the design of distributed systems, P2P networks, and social network search mechanisms.
    \item \textbf{Stochastic Block Model}: The Stochastic Block Model (SBM) is a generative model for random graphs with community structure. Nodes are assigned to different communities (blocks), and the probability of a connection between any two nodes depends entirely on their community assignments. Typically, the within-community connection probability ($p_{in}$) is much higher than the between-community connection probability ($p_{out}$), making SBM a fundamental and essential generative model in the study of community detection algorithms.
    \item \textbf{Random Regular} graph: A random regular graph is a graph uniformly selected from all possible graphs where every vertex has exactly the same degree k, and its specific connections are formed randomly.  
\end{itemize}

\subsection{Examples for Pattern 2-1 and Pattern 2-2}
\label{a_geometric_analysis}
\begin{figure}[htbp]
	\centering 
	\includegraphics[width=0.5\textwidth]{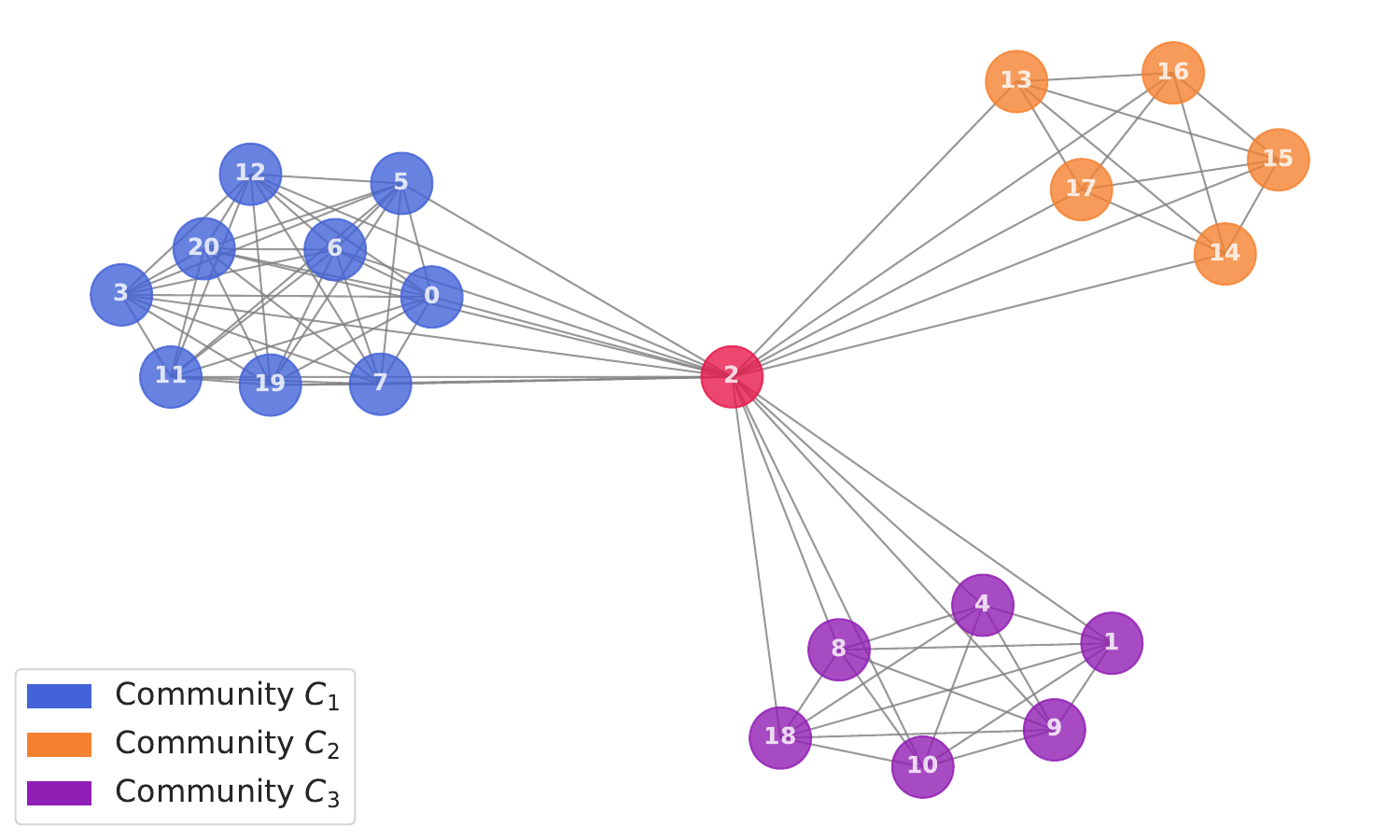} 
	\caption{Sparse graph example from dataset IMDB-B (source graph id is 2), its density is 0.386.}
	\label{fig:examp_imdb_b_2}
\end{figure}

\begin{figure}[htbp]
	\centering 
	\includegraphics[width=0.5\textwidth]{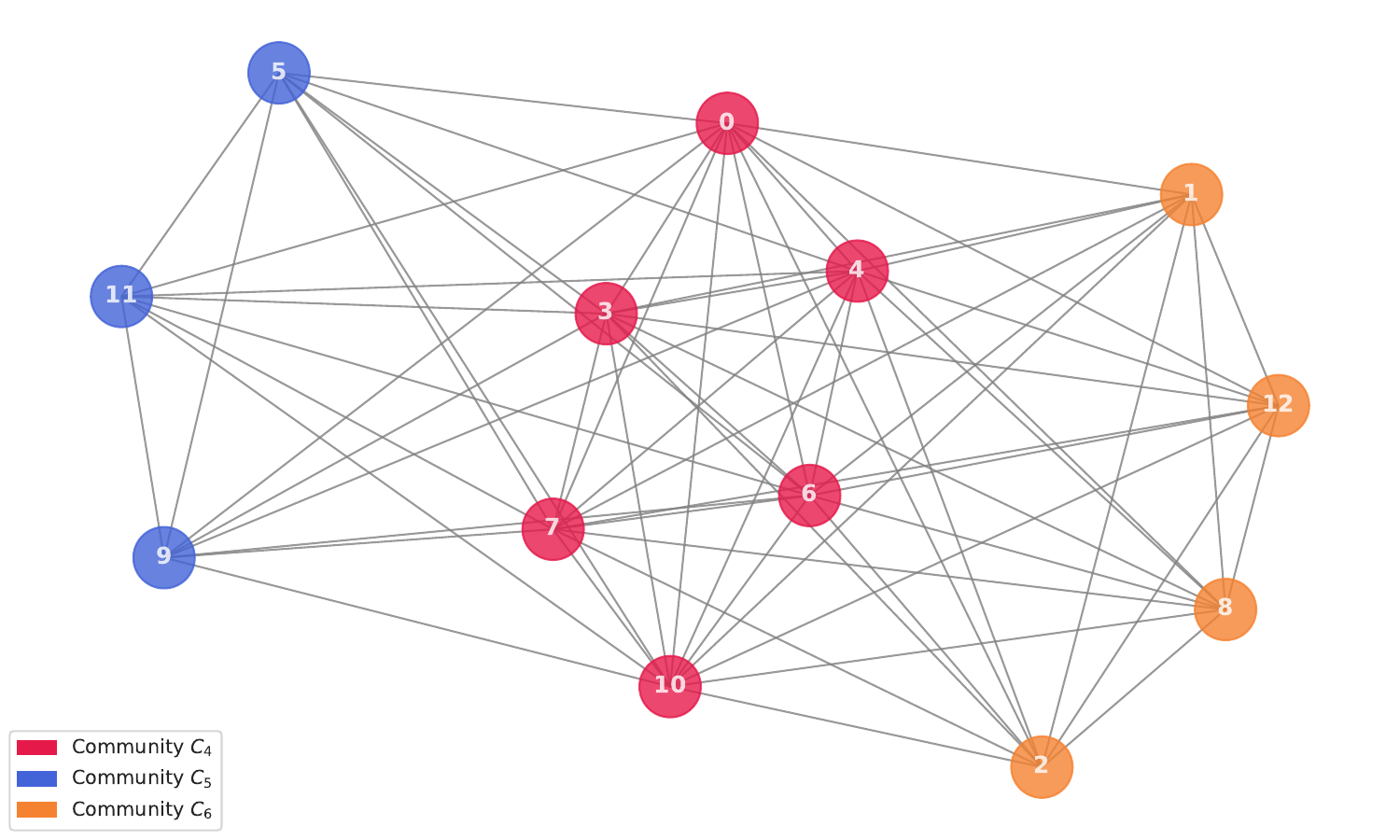} 
	\caption{Dense graph example from dataset IMDB-B (source graph id is 42), its density is 0.846.}
	\label{fig:examp_imdb_b_42}
\end{figure}

We illustrate this using two real-data examples: Fig.~\ref{fig:examp_imdb_b_2} (a sparse graph) and Fig.~\ref{fig:examp_imdb_b_42} (a dense graph).
In Fig.~\ref{fig:examp_imdb_b_2}, Node 2 has the highest betweenness centrality. The sizes of communities $C_1$=\{13,14,15,16,17\}, $C_2$=\{1,4,8,9,10,18\}, and $C_3$=\{0,3,5,6,7,11,12,19,20\} are 5, 6, and 9, respectively. The resistance curvature values between nodes within these three communities are 0.40042, 0.40037, and 0.40031, respectively, satisfying the pattern that curvature decreases with increasing community size. Similarly, the Ollivier-Ricci curvature values between nodes within these communities are 0.6, 0.583, and 0.556, also conforming to the pattern of decreasing curvature with larger community size. However, the differences between the resistance curvature values are relatively small, whereas the differences between the Ollivier-Ricci curvature values are more pronounced, indicating that Ollivier-Ricci curvature possesses a stronger capability to distinguish between communities of different sizes.

The resistance curvature values for edges between Node 2 and nodes in communities $C_1$, $C_2$, and $C_3$ are -1.707, -2.025, and -2.978, respectively. This demonstrates that the larger the community connected to the node, the smaller the curvature value, consistent with Pattern 2-1. Conversely, the Ollivier-Ricci curvature values for edges between Node 2 and nodes in these three communities are -0.15, -0.1167, and 0.0056, respectively. Here, the larger the connected community, the largerthe curvature value, consistent with Pattern 2-2. Notably, the magnitude of difference between the resistance curvature values is substantial, while the differences between the Ollivier-Ricci curvature values are relatively minor. This suggests that resistance curvature exhibits a stronger capability to distinguish between cliques of different sizes.

Consider another example involving an ultra-dense graph, as shown in Fig.~\ref{fig:examp_imdb_b_42}. The sizes of communities $C_4$=\{0,3,4,5,6,7,10\}, $C_5$=\{1,2,8,12\}, and $C_6$=\{5,9,11\} in this graph are 7, 4, and 3, respectively. The Ollivier-Ricci curvature values for edges within these three communities are 0.5417, 0.5556, and 0.5625, showing a gradual increase as community size decreases, which aligns with Pattern 1. Similarly, the resistance curvature values for edges within these communities are -0.1465, 0.4855, and 0.5446, also increasing as community size decreases, consistent with Pattern 1.
Ultra-dense graphs often contain multiple high-betweenness-centrality nodes. In this graph, all nodes in community $C_4$ are high-betweenness-centrality nodes. The resistance curvature values for edges between nodes in $C_4$ and nodes in $C_5$ and $C_6$ are 0.206 and 0.254, respectively. This satisfies Pattern 2-1: the larger the clique connected to the high-betweenness-centrality node, the smaller the curvature value. In contrast, the Ollivier-Ricci curvature values for edges between nodes in $C_4$ and nodes in $C_5$ and $C_6$ are 0.375 and 0.33, respectively. This satisfies Pattern 2-2: the larger the connected clique, the largerthe curvature value.

\textbf{Pattern 3}: The resistance curvature and Ollivier-Ricci curvature of edges connected to nodes with the highest betweenness centrality exhibit different behaviors. Specifically, the resistance curvature of edges connected to the node with the highest betweenness centrality is always negative (Pattern 3-1), while the Ollivier-Ricci curvature of such edges is always positive (Pattern 3-2).

Pattern 3 can be regarded as an extension of Pattern 2. We evaluated the extent to which these two sub-patterns hold across the five datasets. As shown in Table~\ref{tab:pattern312}, both patterns are strongly supported in all five datasets. For resistance curvature, the lower the graph density, the higher the probability that Pattern 3-1 holds—i.e., in sparser graphs, the resistance curvature of edges connected to the node with the highest betweenness centrality is more likely to be negative. For example, in IMDB-B, when the graph density is below 0.71, all graphs satisfy Pattern 3-1. For Ollivier-Ricci curvature, the higher the graph density, the greater the probability that Pattern 3-2 holds—i.e., in denser graphs, the Ollivier-Ricci curvature of edges connected to the node with the highest betweenness centrality is more likely to be positive. For instance, in IMDB-B, when the graph density exceeds 0.83, all graphs satisfy Pattern 3-2.

\setlength{\tabcolsep}{9pt}
\begin{table}[!htbp]
\centering
\caption{The fulfillment status of Pattern 3-1 and Pattern 3-2 across five real-world datasets.}
\label{tab:pattern312}
\begin{tabular}{c|cc|cc}
\hline
\multirow{2}{*}{Dataset} & \multicolumn{2}{c|}{Pattern 3-1} & \multicolumn{2}{c}{Pattern 3-2} \\
\cline{2-5}
 & Density & Ratio & Density & Ratio \\
\hline
\multirow{2}{*}{ENZYMES} & $<$1.00 & 91.31 & $>$0.50 & 93.75 \\
\cline{2-5}
 & $<$0.05 & 100.00 & $>$0.54 & 100.00 \\
\hline
\multirow{2}{*}{PROTEINS} & $<$1.00 & 84.83 & $>$0.50 & 95.15 \\
\cline{2-5}
 & $<$0.15 & 91.37 & $>$0.55 & 100.00 \\
\hline
\multirow{2}{*}{IMDB-B} & $<$1.00 & 98.43 & $>$0.50 & 88.53 \\
\cline{2-5}
 & $<$0.71 & 100.00 & $>$0.83 & 100.00 \\
\hline
\multirow{2}{*}{IMDB-M} & $<$1.00 & 97.01 & $>$0.50 & 93.57 \\
\cline{2-5}
 & $<$0.83 & 100.00 & $>$0.85 & 100.00 \\
\hline
\multirow{2}{*}{COLLAB} & $<$1.00 & 97.84 & $>$0.50 & 89.54 \\
\cline{2-5}
 & $<$0.65 & 99.96 & $>$0.65 & 99.93 \\
\hline
\end{tabular}
\end{table}

\textbf{Divergence in the Applications of Ollivier-Ricci and Resistance Curvatures}
\label{Divergence_in_Application}
High-betweenness nodes are typically located on connecting paths between network clusters (e.g., "bridges" or "bottlenecks") and can serve as split points for cluster partitioning. Both resistance curvature and Ollivier-Ricci curvature can identify such high-centrality nodes, but resistance curvature offers a broader identification range. Even in relatively dense graphs, it can still detect bottleneck nodes through curvature signs. As shown in Fig.~\ref{fig:examp_imdb_b_42}, resistance curvature yields negative curvature within the set of highest-betweenness nodes $C_4$, whereas Ollivier-Ricci curvature results in positive curvature among these nodes, highlighting a significant distinction between the two. This indicates that, even in ultra-dense graphs, resistance curvature can still distinguish node relationships based on the sign of curvature.

If the sign of edge curvature is used to measure the "repulsiveness" or "attractiveness" of a connection, negative curvature suggests a competitive or repulsive relationship between nodes, while positive curvature indicates an attractive or complementary relationship. In resistance curvature, bottleneck nodes and nodes in other communities are often assigned negative curvature, so high-betweenness nodes typically exhibit repulsive relationships. In contrast, Ollivier-Ricci curvature tends to assign positive curvature to relationships with the largest communities, thereby reflecting attractive interactions. This fundamental difference makes each curvature suitable for different application scenarios, with each having its own advantages.

Resistance curvature introduces negative curvature between high-betweenness nodes, providing benefits in the following scenarios:1) \textbf{Suppressing rumor propagation}: If high-influence nodes (e.g., "big V" users) exhibit positive curvature, information may spread rapidly but can also facilitate rumor diffusion. Negative curvature hinders direct information transmission through such nodes, forcing information to detour through ordinary users (via positive curvature paths), thereby lengthening the propagation path and buying time for intervention. 2) \textbf{Optimizing resource allocation and enhancing disaster resilience}: In transportation/communication networks, overloaded hub nodes (e.g., core routers) can cause network-wide congestion. Negative curvature (high transmission cost) between nodes diverts traffic to secondary paths, avoiding single-point overload.

Ollivier-Ricci curvature, on the other hand, tends to establish positive curvature between high-betweenness nodes, strengthening their connections. It is suitable for the following scenarios:1) \textbf{Strengthening information integration and group cohesion}: If high-centrality nodes are isolated, information fragmentation and group polarization may occur. Positive curvature enables key nodes (e.g., media, government, NGOs) to rapidly share information and coordinate resources during disasters. 2) Improving transmission efficiency and disaster resilience: Positive curvature (low transmission cost) facilitates automatic traffic balancing between core nodes, preventing single-point overload. For example, positive curvature links between 5G base stations enable cooperative relaying of user data across multiple base stations, enhancing transmission robustness.

In summary, resistance curvature suppresses risks and distributes load through negative curvature, while Ollivier-Ricci curvature enhances collaboration and improves efficiency via positive curvature.
\end{document}